\newcommand{\acdef}[1]{\emph{\acl{#1}} \textup{(\acs{#1})}\acused{#1}}		% for acro def
\colorlet{MyRed}{Crimson!75!Black}
\colorlet{MyGreen}{DarkGreen!80!Black}
\colorlet{MyBlue}{MediumBlue}
\titlespacing{\paragraph}{0em}{\medskipamount}{1em}
\titlespacing{\subparagraph}{0em}{0em}{0.5em}
\crefname{algocf}{Algorithm}{Algorithms}
\Crefname{algocf}{Algorithm}{Algorithms}
\crefname{assumption}{Assumption}{Assumptions}
\newtheorem{assumption}{Assumption}
\newtheorem{theorem}{Theorem}[section]
\newtheorem{lem}{Lemma}[section]
\newtheorem{corollary}{Corollary}[section]
\newtheorem{defin}{Definition}[section]
\newtheorem{remark}{Remark}[section]
\newcommand{\para}[1]{\paragraph{#1.}}
\DeclareMathOperator*{\argmin}{arg\,min}		% for argmin
\newcommand{\blambda}{\boldsymbol{\lambda}}
\DeclareMathOperator{\proj}{proj}	  % projection
\DeclareMathOperator{\dom}{dom}		% domain
\DeclareMathOperator{\reg}{Reg}		% regret
\DeclareMathOperator{\res}{Res}		% residue
\def\real{\mathbb R}
\newcommand{\bd}{{\bf d}}
\newcommand{\by}{{\bf y}}
\newcommand{\bo}{{\bf 0}}
\newcommand{\bx}{{\bf x}}
\newcommand{\bz}{{\bf z}}
\newcommand{\R}{{\mathcal{R}}}
\newcommand{\eg}{{e.g., }}
\begin{document}

%**********************************************************************
%***    FRONT MATTER AND METADATA
%**********************************************************************

%----------------------------------------------------------------------
%%% TITLE & AUTHORS
%----------------------------------------------------------------------
\newcommand{\longtitle}{Regret Minimization in Stochastic Non-Convex Learning\\
via a Proximal-Gradient Approach}		% for long title
\newcommand{\runtitle}{Regret Minimization in Stochastic Non-Convex Learning}		% for runtitle

\title[\MakeUppercase{\runtitle}]{\MakeUppercase{\longtitle}}		% for title data

%-------------------------------------------------------------------
\author
[N.~Hallak]
{Nadav Hallak$^{\ast,\lowercase{c}}$}
\address{$^{\ast}$\,%
The Technion, 3200003, Haifa, Israel.}
\address{$^{c}$\,Corresponding author.}
\email{ndvhllk@technion.ac.il}

%-------------------------------------------------------------------
\author
[P.~Mertikopoulos]
{Panayotis Mertikopoulos$^{\diamond,\sharp}$}
\address{$^{\diamond}$\,%
Univ. Grenoble Alpes, CNRS, Inria, LIG, 38000, Grenoble, France.}
\address{$^{\sharp}$\,%
Criteo AI Lab.}
\email{panayotis.mertikopoulos@imag.fr}

%-------------------------------------------------------------------
\author
[V.~Cevher]
{Volkan Cevher$^{\ddag}$}
\address{$^{\ddag}$\,%
École Polytechnique Fédérale de Lausanne (EPFL).}
\email{volkan.cevher@epfl.ch}

%----------------------------------------------------------------------
%%% KEYWORDS
%----------------------------------------------------------------------
%\subjclass[2010]{Primary XXXX, YYYY; Secondary ZZZZ.}
%\keywords{%
%separate;
%by;
%semicolon.}

%----------------------------------------------------------------------
%%% ACKNOWLEDGMENTS
%----------------------------------------------------------------------
\thanks{%
The work of N.~Hallak was conducted at EPFL, and was supported by the European Research Council (ERC) under the European Union's Horizon 2020 research and innovation programme (grant agreement no 725594 - time-data).
P.~Mertikopoulos is also grateful for financial support by
the French National Research Agency (ANR) under grant no.~ANR\textendash 16\textendash CE33\textendash 0004\textendash 01 (ORACLESS).
V.~Cevher gratefully acknowledges the support of the Swiss National Science Foundation (SNSF) under grant \textnumero\ 200021\textendash 178865/1, the European Research Council (ERC) under the Horizon 2020 research and innovation programme (grant agreement \textnumero\ 725594 - time-data), and 2019 Google Faculty Research Award.
This research was also supported by the COST Action CA16228 ``European Network for Game Theory'' (GAMENET)}

%----------------------------------------------------------------------
%%% ACRONYMS
%----------------------------------------------------------------------
\newacro{LHS}{left-hand side}
\newacro{RHS}{right-hand side}
\newacro{iid}[i.i.d.]{independent and identically distributed}
\newacro{lsc}[l.s.c.]{lower semi-continuous}

\newacro{MAB}{multi-armed bandit}
\newacro{TAP}{traffic assignment problem}
\newacro{ONTAP}[OnTAP]{online traffic assignment problem}
\newacro{SFO}{stochastic first-order oracle}

%----------------------------------------------------------------------
%%% ABSTRACT
%----------------------------------------------------------------------
\begin{abstract}
%----------------------------------------------------------------------
%%% ABSTRACT
%----------------------------------------------------------------------
% !TEX root = ./ONCVX.tex
%
%
%Motivated by applications to machine learning and operations research, we study regret minimization with stochastic first-order oracle feedback in online  non-convex problems that are possibly non-smooth and constrained. In this setting, the minimization of standard external regret measures is beyond reach, so we introduce an extension of local regret measures defined via a proximal-gradient residual. To achieve no (local) regret in this setting, we develop a prox-grad type method that utilizes a stochastic first-order oracle, and we also provide a simpler method for when access to a perfect first-order oracle is possible. Both methods are order-optimal, in the min-max sense, and we also establish a bound on the number of prox-grad queries these methods require. 
%{\color{blue} As a corollary of our results, but of independent interest, we also obtain a link between stochastic online and offline non-convex optimization manifested as a new prox-grad scheme for stochastic optimization with complexity guarantees as in variance reduction type methods.} 
%
Motivated by applications in machine learning and operations research, we study regret minimization with stochastic first-order oracle feedback in online constrained, and possibly non-smooth, non-convex problems. In this setting, the minimization of external regret   is beyond reach, so we focus on a local regret measure defined via a proximal-gradient mapping. To achieve no (local) regret in this setting, we develop a prox-grad method based on stochastic first-order feedback, and a simpler method for when access to a perfect first-order oracle is possible. Both methods are min-max order-optimal, and we also establish a bound on the number of prox-grad queries these methods require.  As an important application of our results, we also obtain a link between online and offline non-convex stochastic optimization manifested as a new prox-grad scheme with complexity guarantees matching those obtained via variance reduction techniques.
\end{abstract}

%**********************************************************************
%*****	BODY TEXT
%**********************************************************************
\allowdisplaybreaks		% for breaking long displays
\acresetall		% for resetting acros
\maketitle

%----------------------------------------------------------------------
%%% INTRODUCTION
%----------------------------------------------------------------------
\section{Introduction}
\label{sec:1}
%----------------------------------------------------------------------
%%% INTRODUCTION
%----------------------------------------------------------------------
% !TEX root = ../Main.tex

%Online optimization is an approach to optimize models in complex settings in which the classical mathematical optimization modeling tools are unable to capture the complexity of the system.
%In recent years, the significant incerase in complexity of systems have led to an increase in the use of non-convex modeling tools to describe and optimize contemporary complicated settings.
%Yet, to present day, the online optimization literature is entirely devoted to the convex setting (or specific tractable non-convex setting) which facilitates powerful fundamental tools for the theoretical analysis of online problems (see \cite{hazan_introduction_2016} and references therein). 
%In particular, the classical notion of regret is defined using the optimal function value, which is unattainable in the non-convex setting.

%{\color{blue} Online optimization concerns with  applications in which the  classical
%algorithmic theory and mathematical optimization fall short in their ability to properly model the system, thus necessitating the use of a more robust approach -- applying an optimization method that learns as one goes along.}

First-order methods have proven to be extremely flexible and efficient in online convex optimization:
they enjoy tight performance guarantees
in a wide range of relevant settings  such as convex, strongly convex, composite, etc.,
%for an appetizer, see \citealp{ABRT08,HAK07,hazan_introduction_2016}),
and they can adapt to different measures of regret
%\citep{HazSes09,BGZ15,CBGLS12},
under different oracle feedback assumptions, e.g., perfect/stochastic gradients or bandit feedback.
%see \eg \citealp{ABRT08,ADX10,BE16,BE17}, and references therein).
For example, see \cite{ABRT08,HAK07,hazan_introduction_2016} and \cite{Xia10} for applications to different convex settings,
\cite{BGZ15,CBGLS12}, and \cite{HazSes09} for variant regret measures,
and
\cite{ABRT08,ADX10}, and \cite{BE16,BE17} for a range of feedback assumptions.

On the other hand, many contemporary problems, especially in machine learning, involve highly multi-modal \emph{non-convex} functions.
In this case, the results obtained in the above framework do not \textendash\ in fact, \emph{cannot} \textendash\ apply, and new analytical tools and algorithms are needed.
Nevertheless, and somewhat surprisingly at that, online non-convex optimization problems are not as well explored, and significantly less is known about the performance of first-order methods in this context.

The key difficulties encountered in the online non-convex setting are twofold:
First, the standard regret comparator of a ``best action in hindsight'' (fixed or otherwise) is too ambitious because, in general, even \emph{offline} non-convex optimization problems are intractable.
Second, compared to problems with a convex structure, non-convex problems have no local-to-global guarantees, so the adversary has a near-insurmountable advantage (in analogy to non-convexified/non-randomized optimizers facing an adversarial bandit).
Our paper seeks to address these challenges in a unified way.

%----------------------------------------------------------------------
%%% RELATED WORK
%----------------------------------------------------------------------
\para{Related work}

One approach to treat  online non-convex optimization is to regard the problem as an adversarial \ac{MAB} with a \emph{continuum} of arms.
This approach was pioneered by \cite{BMSS11,Kle04} and \cite{KSU08}, who proposed a range of hierarchical search methods, with and without a doubling trick, that guarantee no regret in problems with a geometry that is amenable to local search such as the hypercube.
\cite{KBTB15} and, more recently, \cite{PML17} and \cite{HMMR20}, took an approach based on a suitable adaptation of the Hedge/EXP3 algorithms to bandits with a continuum of arms and established the method's no-regret properties under relatively mild regularity conditions.
However, in full generality, sampling from continuous Gibbs distributions can be quite challenging, so it is not a-priori clear how to implement these methods without a sampling oracle in place.

Another approach, manifesting in the recent works of \cite{agarwal_learning_2018} and \cite{suggala_online_2019}, is the classical Follow-the-Perturbed-Leader algorithm with access to an \emph{offline non-convex optimization oracle}, which was shown to enjoy a polynomial regret bound.
%In this case, these recent works established a polynomial regret bound. %, by assuming access to an \emph{offline optimization oracle}.
Simplifying assumptions that render a non-convex problem tractable, were also considered in the literature in more particular cases such as the principal component analysis model; see  \cite{G19} and references therein for additional examples.

Complementing this literature in an orthogonal direction, \cite{hazan_efficient_2017} took a more direct, ``pure-strategy'', approach based on a ``smoothed'' inner-loop / outer-loop version of projected gradient descent.
In this general framework, a straightforward extension of Cover's impossibility result shows that the minimization of standard regret measures is unattainable.
On account of this, \cite{hazan_efficient_2017} considered instead a \emph{local regret} measure based on a sliding evaluation window and a suitable measure of stationarity (as opposed to \emph{optimality}).
%Relying on the first-order optimality condition quantified by the norm of the gradient mapping, they defined local regret as the sum of gradient mapping norms over a sliding time-window.
When faced with a stream of Lipschitz smooth functions, the algorithm of \cite{hazan_efficient_2017} enjoys a local regret bound that scales with the horizon $T$ of the process and the size $w$ of the sliding window as $O(T/w^{2})$, with projection calls complexity $O(Tw)$;
as a result, sublinear (local) regret \emph{is} possible as long as $w=\omega(1)$.
Importantly, \cite{hazan_efficient_2017} also showed that the local regret bound is unimprovable from a min-max perspective, so the proposed algorithm is optimal in this regard. %{\color{blue} ; In particular, the standard impossibility result for online non-convex problems is obtained when $w=1$.}
%\footnote{In particular, the standard impossibility result for online non-convex problems is obtained when $w=1$.}
For \emph{unconstrained} problems with stochastic gradient observations, \cite{hazan_efficient_2017} further showed that a suitable variant of their method achieves similar guarantees in expectation. 

%----------------------------------------------------------------------
%%% CONTRIBUTIONS
%----------------------------------------------------------------------
\para{Our contributions}

Our goals are twofold:
First, 
%motivated by applications to machine learning and operations research, 
we seek to treat online  problems that are potentially \emph{non-smooth}, covering \eg the case of $L^{1}$-regularization.
Second, in line with the above, we also wish to account for problems with \emph{stochastic} oracle feedback, simultaneously with constraints and regularization, thus including  problems subjected to both random and seasonal fluctuations.
To achieve the desiderata, we consider a  general \emph{composite} non-convex online framework in which each loss function encountered consists of a smooth and non-smooth part; this study is the first  to provide methods with theoretical guarantees to address this scenario.
Concisely, our main contributions are
\begin{itemize}
	\item Assuming access to only a stochastic first-order oracle, we introduce a smoothed \emph{prox-grad} method to handle \emph{stochastic, constrained, non-smooth, non-convex}  online optimization problems with tight regret guarantees of $O(T/w^2)$ in expectation and stochastic first-order oracle calls bound of $O(w^3)$.
	This represents a significant step forward relative to the literature, mainly, compared to the online stochastic method proposed by \cite{hazan_efficient_2017}, as the latter can only address the basic \textit{smooth unconstrained} case.
	\item Relaxing the feedback assumptions to  a perfect first-order oracle, we also present a simpler method  that can simultaneously tackle online non-convex optimization problems with both constraints and regularization,  and obtain tight regret guarantees $O(T/w^2)$ with prox-grad calls complexity $O(w^2)$ in the process.
%	\item Assuming access to a perfect first-order oracle, we introduce a smoothed \emph{prox-grad} method  that can simultaneously tackle online non-convex optimization problems with both constraints and regularization,  and obtain tight regret guarantees $O(T/w^2)$ with prox-grad calls complexity $O(w^2)$ in the process.
%	\item Then, relaxing the feedback assumptions to a stochastic first-order oracle, we further enhance this method  to handle \emph{stochastic, constrained, non-smooth, non-convex}  online optimization problems with tight regret guarantees in expectation and stochastic first-order oracle calls bound of $O(w^3)$.
%	This represents a significant step forward relative to the literature, mainly, compared to the online stochastic method proposed by \cite{hazan_efficient_2017}, as the latter can only address the basic smooth unconstrained case.
	\item As a by-product, but of an independent interest and contribution of its own, we derive from our methods new schemes for stochastic offline optimization under the online framework assumptions with the best known guarantees,  achievable only via variance reduction techniques (see  \cite{ACDFSW19} and references therein).
%	Consequently, we capture a new connection between the online and offline stochastic  non-convex settings.
\end{itemize}

%----------------------------------------------------------------------
%%% SETUP
%----------------------------------------------------------------------
\section{Problem setup}
\label{sec:2}
%----------------------------------------------------------------------
%%% SETTING
%----------------------------------------------------------------------
% !TEX root = ../Main.tex

%----------------------------------------------------------------------
%%% SETUP
%----------------------------------------------------------------------
\subsection{Statement of the problem and blanket assumptions}

We consider the class of online non-convex, nonsmooth, composite  problems over a finite and discrete time horizon $T\geq 1$ of the form
\begin{equation}
\tag{P}
\label{prob:1}
\min  \{   \ell_{t}(\bx) = f_t (\bx) + g(\bx) : \ \bx\in \real^n \}, \qquad t\in [T],
\end{equation}
where
\begin{enumerate}
	\item   $g:\real^n \rightarrow\real_+\cup\{\infty\}$ is a proper, convex, lower semicontinuous (l.s.c) function.
	\item For any $t\in [T]$, the function $f_t\colon \real^n\rightarrow \real$ is   $L$-smooth ($L>0$ ) over $\dom g$, i.e., 
	\begin{align*}
	&\| \nabla f_t(\bx) - \nabla f_t(\by) \| \leq L  \|\bx - \by\| \qquad \forall\bx,\by\in \dom g, \ \forall t\in [T].
	\end{align*}
	\item There exists $M>0$ such that for any $\bx\in\dom g$ and $t\in [T]$, it holds that $| f_t (\bx) | \leq M $.
%	\label{eq:2}
\end{enumerate}
Our blanket assumptions are fundamental in the study of online learning, even when the objective function is convex (see e.g., \cite{hazan_introduction_2016}). 
We also note that $f_t$ is  assumed to be $L$-smooth and bounded  only over the domain of $g$, meaning that  if $\dom g$ is bounded, then the assumptions on $f_t$ trivially hold true.
%{\color{blue} 
%A quite common scenario in theory and practice, is when the domain of $g$ is bounded;
%in this case, our blanket assumptions regarding $f_t$ hold trivially.
%}

%Examples of \eqref{prob:1} are ubiquitous in theoretical computer science, operations research, and many other fields where online decision-making is the norm; for concreteness, we describe in the next section a few general examples.
%We also note that the online non-convex setting we study has implications to  general non-convex games;
%for some  discussions and examples, we refer the reader to  \cite{hazan_efficient_2017} and \cite{agarwal_learning_2018}.
%%for the implications of online learning in general non-convex games.
%For concreteness, we also present below the \acl{ONTAP}, a hallmark path planning problem that requires the full capacity of our model.

%----------------------------------------------------------------------
%%% MOTIVATION
%----------------------------------------------------------------------
\subsection{Motivating applications}
\label{sec:routing}
Examples of \eqref{prob:1} are ubiquitous in theoretical computer science, operations research, and many other fields where online decision-making is the norm. 
For concreteness, we shortly describe next a few conceptual examples; further details are provided in the supplement.
\begin{itemize}
	\item \textbf{Non-convex games:} 
	A multi-player non-convex game can be modeled by simultaneously optimizing several copies of \eqref{prob:1}, where all share the same function $f_t$, and (un-shared) penalty functions may be utilized to induce stability (e.g., risk aversion) in the choices of each of the players independently; see  \eg  \cite{hazan_efficient_2017,agarwal_learning_2018}.
	
	A particularly interesting instance of a two players  non-convex game in which the feasible set is usually compact, and the objective function is accessible through a stochastic oracle, is the  \textit{generative adversarial network (GAN)} model; GANs were already considered via an online framework by \cite{GLLHK17} and \cite{agarwal_learning_2018} for example.
	
%	In a nutshell, the min-max problem of $\min_{\bx\in X} \max_{\by\in Y} f(\bx,\by)$ can be translated to two models: $\min_{\bx\in X} f_t(\bx) =  f(\bx,\by_t)$ and $\min_{\by\in Y} -f_t(\by) =  -f(\bx_t,\by)$. 
	
	\item \textbf{Online path planning with splittable traffic demands:} 
	The \acl{ONTAP} is a hallmark path planning problem that requires the full capacity of our model, and whose  formulation further applies to learning perfect matchings, multitask bandits, spanning tree exploration, etc. 
%	This problem is a centerpiece of the combinatorial bandits literature and 
%	 The \acl{ONTAP} formulation further applies to learning perfect matchings, multitask bandits, spanning tree exploration, and others. 
	Referring to \cite{BG92} and \cite{SS08} for an introduction to the topic, the key objective in \aclp{TAP} is the optimal allocation of traffic over a given network with variable traffic inflows. 
	The feasibe set here is compact, the cost functions are smooth yet non-convex, and   a sparsity-inducing $L^{1}$ term is typically included to ``robustify'' solutions by minimizing the overall number of paths employed; we provide a fully detailed formulation in the supplement.

\item \textbf{Stochastic (offline) optimization:}
Stochastic optimization,  which follows naturally from online optimization by restricting the  adversarial behavior accordingly, plays a prominent role in modern applications, such as neural networks.
%However, stochastic methods for composite optimization are quite new and rare, e.g., see \cite{MT19,WJZLT19, DD19} and references therein.
\end{itemize}

%{\color{blue} We now proceed to  introduce and discuss our local regret measure.}

%----------------------------------------------------------------------
%%% MOTIVATION
%----------------------------------------------------------------------
\subsection{Local regret minimization}
\label{sec:blocks}

In the online non-convex framework of \eqref{prob:1}, there are two key issues with the standard definition of the regret as $\reg(T) = \max_{\bx \in \dom g} \sum_{t=1}^{T} [\ell_{t}(\bx_{t}) - \ell_{t}(\bx)]$:
First, the global minimization of a non-convex objective is intractable in general, so using the best fixed action in hindsight as a comparator is too ambitious.
Second, as we explain below, even if one uses a proxy for stationarity in lieu of a global minimizer, an informed adversary can still impose $\reg(T) = \Omega(T)$, so the notion of regret minimization must also be re-examined in this setting.

We address both of these problems by extending the \emph{local regret minimization} framework of \cite{hazan_efficient_2017} to the composite problem \eqref{prob:1}.
To do so, we begin by defining the \emph{proximal mapping} of $g$ along the search direction $\bd\in\real^n$ with step-size $\eta > 0$ as
\begin{equation}
\label{eq:prox-res}
T_{\eta}^{g} (\bx; \bd)
	\equiv \mathrm{prox}_{\eta g} \left( \bx - \eta \bd\right)
	= \argmin\nolimits_{\bz \in \real^{n}} \{ \eta g(\bz) + \tfrac{1}{2} \|\bx - \eta\bd - \bz \|^{2} \},
\end{equation}
%This is a standard measure for stationarity in composite non-convex optimization problems because $T_{\eta}^{g}(\bx;\nabla f(\bx)) = 0$ for all $\eta>0$ if and only if $\bx$ is a first-order stationary point of the minimization objective $\ell = f + g$ \citep[Ch. 10]{B17}.
where $\| \cdot \|$ stands for the Euclidean norm, and the corresponding \emph{prox residual} as
%of $\bx$ with respect to $g$ along $\bd\in\real^n$ with step-size $\eta > 0$ as
\begin{equation}
\label{eq:1a}
\mathcal{P}_{\eta}^{g}(\bx;\bd)
	= \frac{1}{\eta}\left(\bx - T_{\eta}^{g} (\bx ; \bd) \right).
\end{equation}

\begin{remark}
We note that the purpose behind the use of a general vector $\bd$ in  \cref{eq:prox-res} and  \cref{eq:1a} is  to be able to accommodate for \emph{stochastic} gradients  later on in  \cref{sec:4}. 
\end{remark}
	
As an illustration, let us set  $\bd = \nabla f(\bx)$ and examine \cref{eq:prox-res} and \cref{eq:1a} in the smooth unconstrained and constrained scenarios.
If $g \equiv 0$, then \cref{eq:prox-res} is the gradient descent operator and \cref{eq:1a} reduces to $\mathcal{P}_{\eta}^{g}(\bx;\nabla f(\bx)) = \nabla f(\bx)$.
Likewise, if $g \equiv \delta_{\mathcal{K}}$ for some closed convex subset $\mathcal{K}$ of $\real^{n}$, we get the projected gradient descent in \cref{eq:prox-res} and its corresponding projection residual $\mathcal{P}_{\eta}^{g}(\bx; \nabla f(\bx)) = \eta^{-1} (\bx - \proj_{\mathcal{K}}( \bx - \eta \nabla f(\bx)))$.
%Thus, in the gradient case $\bd = \nabla f(\bx)$, we recover obtain the so-called \emph{$(\mathcal{K},\eta)$-projected gradient} of \cite{hazan_efficient_2017}.

%
%\footnote{The prox-grad related terms are obtained by setting $\bd = \nabla f(\bx)$ in \cref{eq:prox-res,eq:1a}.
%The purpose behind this more general definition is to be able to accommodate for \emph{stochastic} gradients as well (cf. \cref{sec:4}).}

A fundamental  result in optimization is that $\mathcal{P}_{\eta}^{g}(\bx;\nabla f(\bx)) = 0$ if and only if $\bx$ is a stationary point of \eqref{prob:1}, making the residual quantity $\res_{\eta}^{g}(\bx) \equiv \|\mathcal{P}_{\eta}^{g}(\bx;\nabla f(\bx))\|^{2} \geq 0$  an efficient proxy for the first-order optimality condition (see also \cite[Ch. 10]{B17}).

Motivated by this, it would seem natural to define the regret of an online policy $\bx_{t}$ at time $T$ as the classical measure in non-convex optimization
\begin{equation}
\label{eq:1}
\reg(T)
	\equiv \sum_{t=1}^{T} \res_{\eta}^{g}(\bx_{t})
	= \sum_{t=1}^{T} \left\| \mathcal{P}_{\eta}^{g}(\bx_{t};\nabla f_{t}(\bx_{t}) \right\|^{2}.
\end{equation}
However, as was shown by \cite{hazan_efficient_2017}, it is not difficult for the adversary to impose linear regret by providing a sequence of ``spiked'' non-convex loss functions with large $\|\nabla f_{t}(\bx_{t}) \|$ and small gradient away from each $\bx_{t}$ (for completeness, we provide a simple example in the supplement).
%\cref{sec:aexm}).
Perhaps more intuitively, one may consider a dynamical system with a time varying function that is only accessible via a stochastic oracle (e.g. GAN as a two-players game), in which case, attaining stationarity through the classical  use of  \cref{eq:1} seems impossible.
%\revise{Perhaps} more intuitively, when the function $f_t$ \revise{is subject to both random and seasonal fluctuations,} pursuing stationarity through the classical  use of  \cref{eq:1} seems hopeless and inadequate.

Because of this, it is more reasonable to consider a \emph{smoothed}, \emph{local} version of the regret that averages the sequence of loss functions encountered over a sliding window of $w$ consecutive time periods.
Formally, for all $w\in [T]$, consider the sliding average
\begin{equation*}
%\label{eq:1}
S_{t,w} (\bx)
	= \frac{1}{w} \sum_{i=t-w+1}^{t} f_{i} (\bx),
\end{equation*}
with the convention $f_{t} \equiv 0$ for $t\leq 0$.
Building on the notion of regret proposed by \cite{hazan_efficient_2017}, the \emph{local regret} of a policy $\bx_{t}$ up to time $T$ with window leghth $w$ is then defined as
\begin{equation}
\reg_w (T) = \sum_{t=1}^{T} \left\| \mathcal{P}^g_{\eta} \left( \bx_{t}; \nabla S_{t,w}(\bx_t)\right)  \right\|^2. \label{eq:8}
\end{equation}
In the above, the sliding window $w$ can be seen as an "effective time unit":
essentially, instead of working with the stream of (potentially volatile) loss functions $f_t$ directly, we work with the average loss over a window of length $w$.
In practice, the sliding window $w$ acts as a "stabilizer" controlling the effects of the noise and variability of the function on the decision making of the optimization protocol; this will become apparent in the sequel.

In the non-composite case, when $g$ is the indicator of a closed convex set, the local regret measure \cref{eq:8} is quantified by the minimax bound of \cite{hazan_efficient_2017} who showed that an informed adversary can impose $\reg_{w}(T) = \Omega(T/w^{2})$.
This bound becomes sublinear in $T$ if $w=\omega(1)$, so this definition provides the required flexibility for a tractable measure of regret.

To further substantiate the motivation for our smoothing approach, we provide four prototypical scenarios in which \cref{eq:8} generalizes standard measures in simpler models:
\begin{itemize}
	\item In the offline case $f_t\equiv f$, we immediately recover the classical measure of  \cref{eq:1}.
	
	\item If $g\equiv0$, we readily obtain $\reg_{w}(T) = (1/w^{2}) \sum_{t=1}^{T}  \|\sum_{i=t-w+1}^{t} \nabla f_{i} (\bx_{t})\|^{2}$, i.e., the original definition of \cite{hazan_efficient_2017} for unconstrained online non-convex problems.
	
	\item If additionally $f_t = F(\cdot,\omega_{t})$ where $F$ is a stochastic objective and $\omega_{t}$ is an i.i.d sequence of random seeds, then $\mathbb{E} \left( \reg_{w}(T)\right) /T \geq \sum_{t=1}^{T} \left\|  \nabla f (\bx_t)\right\| ^{2}$, meaning that local regret minimization leads to stationarity in expectation in unconstrained stochastic models; we will return to this example in \cref{sec:3}.
%	\PM{Maybe better to write $f_{t} = F(\cdot;\omega_{t})$ where $F$ is a stochastic objective and $\omega_{t}$ is an iid sequence of random seeds?}
%	\begin{equation*}
%	\mathbb{E} \reg_{w}(T) 
%	= \sum_{t=1}^{T}  \mathbb{E} \left( \left\| \frac{1}{w}\sum_{i=t-w+1}^{t} \nabla f_{i} (\bx_{t})\right\| ^{2}\right) 
%	\geq \sum_{t=1}^{T} \left\|  \mathbb{E} \left(\frac{1}{w} \sum_{i=t-w+1}^{t} \nabla f_{i} (\bx_{t})\right)\right\| ^{2}
%	=  \sum_{t=1}^{T} \left\|  \nabla f (\bx_t)\right\| ^{2}
%	\end{equation*}	
	\item More generally, as discussed in detail in \cref{sec:41}, if each $f_{t}$ is drawn from an underlying stationary distribution with expectation $f$, and a stopping time $t_{\ast}$ is selected uniformly at random from $[T]$, we will have $\mathbb{E} \big[ \left\| \mathcal{P}^g_{\eta} \left( \bx_{t_*}; \nabla f (\bx_{t_*})\right)  \right\|^2 \big] \leq \mathbb{E}  \left( \reg_w (T)\right) /T$, i.e., local regret minimization implies average stationarity in composite (offline) stochastic problems.
%	\item To conclude, consider the function $f_t (x) = (x - \sin (t \cdot 10^{-3}))^2 + \zeta$ where $\zeta$ is a random variable with zero mean.
\end{itemize}

We close this section by introducing a measure of variation of the loss functions encountered by the optimizer, and which will be particularly useful in the sequel:

\begin{defin}
[Sliding window variation]
%	The $w$-time gradient variation with respect to the finite horizon $T$ is defined by
The sliding window variation of a sequence of loss functions $f_{t}$ is 
	\begin{equation}
	\label{eq:V}
	V_w [T] = \sup_{\bx\in \mathrm{dom} g} \left\lbrace \sum_{i=1}^{T} \| \nabla f_{i} (\bx) -  \nabla f_{i-w} (\bx) \|^2\right\rbrace.
	\end{equation}
\end{defin}
%\PM{Maybe more consistent to write $V_{w}(T)$ instead of $V_w [T]$?}

An immediate observation is that if the gradients of the functions are bounded (e.g., if $f_t$ is Lipschitz continuous), we automatically have $V_w[T] = O(T)$;
as such, any regret guarantee stated in terms of $V_w[T]$ automatically translates to $O(T)$ in this context.

The main reason that we introduce this variation measure instead of working with a more uniform hypothesis, such as the standard Lipschitz continuity of the objective function, is to account for cases where this quantity is naturally small.
For example, in the routing problem mentioned in \cref{sec:routing} and detailed in the supplemental, $V_w[T]$ corresponds to the variability of the encountered traffic demands at a time-scale of $w$.
As such, if the sliding window $w$ is attuned to the seasonal variability of the process (\eg an hour, a day or a week, depending on granularity), $V_w[T]$ could be considerably smaller than $T$, so the obtained regret bounds would be considerably sharper as a result.

We should also note that, when $w = 1$, $V_w[T]$ boils down to the ``gradual variation'' measure of \cite{CYLMLJZ12} \textendash\ and, indirectly, to the variation budget of \cite{BGZ15}.
The above suggests an interesting interplay between our analysis and regret minimization relative to a dynamic comparator;
this is also part of the reason that we state our results in terms of $V_w [T]$ in the sequel.

\section{The time-smoothed online prox-grad method}
\label{sec:3}
%----------------------------------------------------------------------
%%% PROX-GRAD
%----------------------------------------------------------------------
% !TEX root = ../Main.tex
Assuming perfect first-order oracle, we  introduce the \textit{Time-Smoothed Online Prox-Grad Descent} method, cf. \cref{alg:3}, which generalizes the \textit{time-smoothed online gradient descent} method of \cite{hazan_efficient_2017}.
%We begin our \revise{analysis} by examining a time-smoothed online prox-grad method (\cref{alg:3} below), which generalizes the \textit{time-smoothed online gradient descent} method of \cite{hazan_efficient_2017}.\\% to composite (nonsmooth) problems.\\

\begin{algorithm}[htbp]
	\caption{Time-smoothed online prox-grad descent}
	\label{alg:3}
	\textbf{Input.} $\bx_1\in\real^n$, $\eta\in (0,1/L) $, $w\in [T]$, $\delta>0$.\\
	\textbf{General step.} For any $t=1,\ldots, T$ do:
	\begin{enumerate}
		\item $f_t:\real^n\rightarrow \real$ is determined;
		\item Set $\bx_{t+1} \leftarrow \bx_t $;
		\item While $ \left\|\mathcal{P}^g_{\eta} \left( \bx_{t+1}; \nabla S_{t,w}(\bx_{t+1})\right) \right\| > \delta/w$ do:
		\begin{enumerate}
			\item Update $ \bx_{t+1} \leftarrow \argmin_{\bz\in\real^n} g(\bz) +\langle \nabla S_{t,w} (\bx_{t+1}), \bz-\bx_{t+1} \rangle + \frac{1}{2\eta} \| \bz - \bx_{t+1}\|^2$;
		\end{enumerate}
	\end{enumerate}
\end{algorithm}

As we show below, \Cref{alg:3} achieves an optimal regret bound of $O\left(\frac{T}{w^2} \right) $ when $V_w[T]$ is bounded by $O(T)$, and executes $O(w^2)$ prox-grad operations.
We note that the bound $O(w^2)$ on the number of prox-grad operations improves the bound $O(Tw)$ established for the simplified case of $g\equiv 0$ by \cite{hazan_efficient_2017}.

\begin{theorem}[Local regret minimization]
	\label{thm:1}
	\cref{alg:3} enjoys the local regret bound 
	\begin{align*}
	\reg_w (T) \leq\frac{2}{w^2}\left( T \delta^2 + V_w[T] \right).
	\end{align*}
\end{theorem}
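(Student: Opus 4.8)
The plan is to reduce $\reg_w(T)$ term-by-term to the stopping criterion of the inner loop, and then to control the mismatch between consecutive sliding windows using the variation $V_w[T]$. The single analytic fact I would isolate first is that, for fixed $\bx$ and $\eta>0$, the map $\bd\mapsto\mathcal{P}^g_\eta(\bx;\bd)$ is non-expansive. Since $\mathcal{P}^g_\eta(\bx;\bd)=\tfrac1\eta(\bx-\prox_{\eta g}(\bx-\eta\bd))$ and $\prox_{\eta g}$ is $1$-Lipschitz (firmly non-expansive) because $g$ is proper, convex and l.s.c., one gets
\[
\big\|\mathcal{P}^g_\eta(\bx;\bd_1)-\mathcal{P}^g_\eta(\bx;\bd_2)\big\|=\tfrac1\eta\big\|\prox_{\eta g}(\bx-\eta\bd_1)-\prox_{\eta g}(\bx-\eta\bd_2)\big\|\le\|\bd_1-\bd_2\|.
\]
This is the only place the convexity of $g$ enters, and it is exactly what lets me trade a change of search direction for a gradient-difference term.

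Next I would read off what the While-loop guarantees. The inner loop exits only once the current iterate satisfies $\|\mathcal{P}^g_\eta(\cdot;\nabla S_{t,w}(\cdot))\|\le\delta/w$, so the point scored by $\reg_w(T)$ at round $t$ was produced as an approximate stationary point of the \emph{previous} window's smoothed objective, i.e. $\|\mathcal{P}^g_\eta(\bx_t;\nabla S_{t-1,w}(\bx_t))\|\le\delta/w$. The elementary identity
\[
\nabla S_{t,w}(\bx)-\nabla S_{t-1,w}(\bx)=\tfrac1w\big(\nabla f_t(\bx)-\nabla f_{t-w}(\bx)\big)
\]
then quantifies exactly how far the objective the iterate is scored against ($S_{t,w}$) has drifted from the one it was optimized for ($S_{t-1,w}$), and it is precisely the summand appearing in $V_w[T]$.

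Assembling the two ingredients, the triangle inequality together with the non-expansiveness bound, both evaluated at $\bx=\bx_t$, gives for each $t$
\[
\big\|\mathcal{P}^g_\eta(\bx_t;\nabla S_{t,w}(\bx_t))\big\|\le\frac{\delta}{w}+\frac1w\big\|\nabla f_t(\bx_t)-\nabla f_{t-w}(\bx_t)\big\|.
\]
Squaring via $(a+b)^2\le2a^2+2b^2$ and summing over $t\in[T]$ yields $\reg_w(T)\le\frac{2}{w^2}\big(T\delta^2+\sum_{t=1}^T\|\nabla f_t(\bx_t)-\nabla f_{t-w}(\bx_t)\|^2\big)$, and bounding the last sum by the sliding-window variation $V_w[T]$ over $\dom g$ delivers the stated inequality.

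Beyond the non-expansiveness lemma, which is routine once identified, I expect the delicate part to be the bookkeeping that transfers the inner-loop exit test to the scored regret term across the one-step window shift, including the boundary round $t=1$, which one handles with the convention $f_i\equiv0$ for $i\le0$ (so $S_{0,w}\equiv0$ and the ``previous'' objective is trivial). One must also argue the passage from the iterate-dependent sum $\sum_t\|\nabla f_t(\bx_t)-\nabla f_{t-w}(\bx_t)\|^2$ to the supremal quantity $V_w[T]$, and verify that the While-loop actually terminates so that its exit condition is in force at every round; for the regret bound itself, however, only the exit condition is used.
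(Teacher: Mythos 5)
Your proposal is correct and follows essentially the same route as the paper's proof: exploit the inner loop's exit condition $\|\mathcal{P}^g_\eta(\bx_t;\nabla S_{t-1,w}(\bx_t))\|\le\delta/w$, absorb the window shift $\nabla S_{t,w}-\nabla S_{t-1,w}=\tfrac1w(\nabla f_t-\nabla f_{t-w})$ via prox non-expansiveness, then square with $(a+b)^2\le 2a^2+2b^2$ and sum against $V_w[T]$. The only cosmetic difference is that you invoke non-expansiveness of $\bd\mapsto\mathcal{P}^g_\eta(\bx;\bd)$ directly, whereas the paper packages the identical fact as its Lemma C.3, so nothing of substance separates the two arguments.
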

\begin{theorem}[Oracle queries]
	\label{thm:3}
	Let $\tau_t$ be the number of prox-grad operations at time $t\in [T]$.
	The total number of oracle queries $\tau = \sum_{t=1}^{T} \tau_t $ made by \cref{alg:3} is bounded as
	\begin{align*}
	\tau \leq \frac{2 w^2 (g(\bx_1)  + 2 M)}{\left( 2 - \eta L\right) \eta \delta^2}
	= O(w^{2}).
	\end{align*}
\end{theorem}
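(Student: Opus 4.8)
The plan is to charge every inner prox-grad step to a quantifiable decrease of the composite surrogate $S_{t,w}+g$, and then to telescope these decreases across the horizon. First I would record the standard prox-grad sufficient-decrease estimate: since each $S_{t,w}$ is $L$-smooth (being an average of $L$-smooth functions) and $g$ is convex l.s.c., one inner update $\bx^{+}=T^{g}_{\eta}(\bx;\nabla S_{t,w}(\bx))$ with $\eta\in(0,1/L)$ satisfies
\[
(S_{t,w}+g)(\bx^{+}) \le (S_{t,w}+g)(\bx) - \tfrac{(2-\eta L)\eta}{2}\,\bigl\| \mathcal{P}^{g}_{\eta}(\bx;\nabla S_{t,w}(\bx)) \bigr\|^{2}.
\]
This is the only place the step-size restriction $\eta<1/L$ and the identity $\|\bx-\bx^{+}\|=\eta\|\mathcal{P}^{g}_{\eta}\|$ enter.

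Next, because the while-loop performs an update only when $\|\mathcal{P}^{g}_{\eta}\|>\delta/w$, each of the $\tau_t$ steps taken at time $t$ lowers $S_{t,w}+g$ by strictly more than $\tfrac{(2-\eta L)\eta}{2}\,\delta^{2}/w^{2}$. Summing over the inner loop gives, with $D_t:=(S_{t,w}+g)(\bx_t)-(S_{t,w}+g)(\bx_{t+1})\ge 0$,
\[
\tau_t\,\tfrac{(2-\eta L)\eta\,\delta^{2}}{2w^{2}}\le D_t, \qquad \text{hence}\qquad \tau=\sum_{t=1}^{T}\tau_t \le \tfrac{2w^{2}}{(2-\eta L)\eta\,\delta^{2}}\sum_{t=1}^{T}D_t.
\]
It therefore remains only to show $\sum_{t}D_t\le g(\bx_1)+2M$.

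For the telescoping I would track the potential $A_t:=(S_{t,w}+g)(\bx_{t+1})$, the surrogate value carried into the next round. Since round $t$ starts from $\bx_t=\bx_{(t-1)+1}$, the only change between the end of round $t-1$ and the start of round $t$ comes from sliding the window, and using $S_{t,w}(\bx)-S_{t-1,w}(\bx)=\tfrac{1}{w}\bigl(f_t(\bx)-f_{t-w}(\bx)\bigr)$ I obtain $(S_{t,w}+g)(\bx_t)=A_{t-1}+\tfrac1w\bigl(f_t(\bx_t)-f_{t-w}(\bx_t)\bigr)$, whence $D_t=A_{t-1}-A_t+\tfrac1w(f_t(\bx_t)-f_{t-w}(\bx_t))$. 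Summing telescopes the $A$-terms to $A_0-A_T=g(\bx_1)-A_T$, where $A_0=g(\bx_1)$ by the convention $f_i\equiv 0$ for $i\le 0$, and $A_T\ge -M$ since $S_{T,w}\ge -M$ and $g\ge 0$.

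The crux — and what I expect to be the main obstacle — is to control the residual variation term $\tfrac1w\sum_{t}(f_t(\bx_t)-f_{t-w}(\bx_t))$ so that it contributes only $O(M)$ rather than $O(MT/w)$. The mechanism I would exploit is that each $f_i$ enters the sliding window exactly once (with a $+$ sign at $t=i$) and leaves it exactly once (with a $-$ sign at $t=i+w$), so after reindexing the sum collapses to the $w$ boundary functions still inside the final window, bounded by $M$, plus matched interior pairs. The delicate point is precisely that the two evaluations of a given $f_i$ occur at the (generally different) iterates $\bx_i$ and $\bx_{i+w}$, so making the interior contribution vanish up to the allotted $M$ — keeping the bound independent of $T$ and yielding exactly $g(\bx_1)+2M$ — is the technical heart of the argument. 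Once this is in place, combining with the displayed inequality for $\tau$ gives $\tau\le 2w^{2}(g(\bx_1)+2M)/\bigl((2-\eta L)\eta\delta^{2}\bigr)=O(w^{2})$.
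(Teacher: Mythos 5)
Your outline reproduces the paper's own strategy almost step for step: the per-step sufficient decrease with constant $\tfrac{(2-\eta L)\eta}{2}$ is exactly \cref{lem:suf} as deployed in \cref{lem:3} (note $\eta - \eta^{2}L/2 = (2-\eta L)\eta/2$), and your potential $A_t = (S_{t,w}+g)(\bx_{t+1})$ with the window-shift identity $D_t = A_{t-1}-A_t+\tfrac1w\bigl(f_t(\bx_t)-f_{t-w}(\bx_t)\bigr)$, $A_0 = g(\bx_1)$, $A_T \ge -M$, is the same telescoping the paper performs. However, the step you explicitly defer \textendash\ showing $\tfrac1w\sum_{t=1}^{T}\bigl(f_t(\bx_t)-f_{t-w}(\bx_t)\bigr) \le M$ \textendash\ is never carried out, and it is a genuine gap rather than a routine verification. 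Pairing the $+$ occurrence of $f_s$ (evaluated at $\bx_s$) with its $-$ occurrence (evaluated at $\bx_{s+w}$) leaves the interior sum $\tfrac1w\sum_{s=1}^{T-w}\bigl(f_s(\bx_s)-f_s(\bx_{s+w})\bigr)$, and because the iterates move between these two times there is no cancellation: each pair is only bounded by $2M$, so in the worst case this term is of order $MT/w$, and the query bound you can actually extract from your argument is $O(w^{2}+Tw)$ \textendash\ the order of the bound of \cite{hazan_efficient_2017} \textendash\ not the $T$-independent $O(w^{2})$ claimed in \cref{thm:3}.

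You should also know that the paper's printed proof does not close this gap either: its chain of equalities asserts $\tfrac1w\sum_{t=1}^{T}\bigl(f_t(\bx_t)-f_{t-w}(\bx_t)\bigr) = \tfrac1w\sum_{t=T-w+1}^{T}f_t(\bx_t)$, which is the telescoping identity one would obtain if the subtracted terms were $f_{t-w}(\bx_{t-w})$; with the actual evaluation point $\bx_t$, this identity silently drops exactly the interior sum you flagged. So your diagnosis of the ``technical heart'' is accurate, and your attempt stalls at precisely the step where the paper's own argument is unjustified as written. The damage is contained downstream: in \cref{cor:2} (and the analogous stochastic corollaries) one takes $T = 2w$, where $Tw = O(w^{2})$, so the corrected bound $O(w^{2}+Tw)$ still delivers the stated $O(\varepsilon^{-1})$ and $O(\varepsilon^{-3/2})$ complexities. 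But as a proof of \cref{thm:3} for general $T$, neither your outline nor the paper's argument succeeds without an additional ingredient \textendash\ either a telescoping performed at matched evaluation points, or extra structure controlling $f_s(\bx_s)-f_s(\bx_{s+w})$ across rounds.
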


%follow a simplified trajectory of the arguments employed in the more complex stochastic setting;
%on that account, we defer them to \cref{sec:3a}.
%Other than that, the implications and interpretation of \cref{thm:1,thm:3} are straightforward, so 

We conclude this section by examining the theoretical guarantees of  \cref{alg:3} when $f_t$ is an unbiased stochastic approximation of $f$, so that, implicitly, $\nabla f_t$ is generated via an unbiased SFO.
It should be noted that the SFO must satisfy that $V_w [T]$ is $O(T)$, which effectively bounds the variability of the stochastic gradient; this assumption is  different than the standard variance bound in stochastic  gradient analysis (cf. \cref{def:1}).
%We provide a detailed description and discussion in the end of \cref{sec:3a}.
\begin{corollary}
	\label{cor:2}
	Suppose that $g\equiv 0$, $\mathbb{E}(\nabla f_t(\bx) - \nabla f(\bx)) = 0$ for any $\bx\in\real^n$, and  that $ V_w [T] \leq c T  $ for some $c>0$.
	Let $\varepsilon>0$, and $t_*\in [T]$ be chosen uniformly from $\{ w,w+1,\ldots,T \}$.
	If $T=2w$ and $w = \left\lceil 2\sqrt{(\delta^2 + c) / \varepsilon}\right\rceil $. 
	Then \cref{alg:3} achieves $\mathbb{E} \left( \left\|\nabla f (\bx_{t_*}) \right\|^2 \right) \leq \varepsilon$ with at most $O(\varepsilon^{-1}) $ prox-grad operations and  $O( \varepsilon^{-3/2}) $ SFO calls. 
\end{corollary}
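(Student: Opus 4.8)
The plan is to combine the deterministic regret bound of \cref{thm:1} with a \emph{stationarity-transfer} inequality that converts control of the smoothed residual into control of the true gradient $\nabla f$, and then to calibrate $w$ and read off the oracle costs through \cref{thm:3}. Since $g\equiv0$, the prox map is a plain gradient step and the residual collapses to the gradient, so that $\mathcal P^g_\eta(\bx_t;\nabla S_{t,w}(\bx_t))=\nabla S_{t,w}(\bx_t)$ and $\reg_w(T)=\sum_{t=1}^T\|\nabla S_{t,w}(\bx_t)\|^2$. Feeding $V_w[T]\le cT$ and $T=2w$ into \cref{thm:1} gives, pathwise,
\[
\reg_w(T)\;\le\;\frac{2}{w^2}\big(T\delta^2+cT\big)\;=\;\frac{4(\delta^2+c)}{w}.
\]

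The heart of the argument is the per-step bound $\mathbb E\|\nabla f(\bx_t)\|^2\le \mathbb E\|\nabla S_{t,w}(\bx_t)\|^2$ for every $t\in\{w,\dots,T\}$. I would prove it by conditioning on the randomness that produces $\bx_t$: writing $f_i=F(\cdot,\omega_i)$ with i.i.d.\ seeds, each window gradient is an unbiased estimate of $\nabla f$, so the window average $\nabla S_{t,w}(\bx_t)$ is a conditionally unbiased probe of $\nabla f(\bx_t)$, and conditional Jensen yields the claim. Summing over $t\in\{w,\dots,T\}$ (the range guaranteeing a \emph{full} window of genuine samples, with no zero-padding) and dividing by the cardinality $T-w+1$ identifies the average of the left-hand sides with $\mathbb E\|\nabla f(\bx_{t_*})\|^2$, whence $\mathbb E\|\nabla f(\bx_{t_*})\|^2\le \mathbb E[\reg_w(T)]/(T-w+1)$. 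I expect the delicate point here to be the statistical coupling between the iterate $\bx_t$ and the samples $\omega_{t-w+1},\dots,\omega_{t-1}$ that are \emph{reused} inside its own evaluation window: conditional unbiasedness is immediate for the fresh draw $\omega_t$ (independent of the past-measurable $\bx_t$), and the residual coupling must be neutralized using the stationarity of the i.i.d.\ stream so that the window average can still be treated as an unbiased estimate of $\nabla f(\bx_t)$. This is the step I would spend the most care on.

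Granting the transfer inequality, the rest is bookkeeping. With $T=2w$ we have $T-w+1=w+1\ge w$, so combining with the displayed regret bound gives $\mathbb E\|\nabla f(\bx_{t_*})\|^2\le 4(\delta^2+c)/w^2$; the prescribed choice $w=\lceil 2\sqrt{(\delta^2+c)/\varepsilon}\rceil$ forces $w^2\ge 4(\delta^2+c)/\varepsilon$ and hence $\mathbb E\|\nabla f(\bx_{t_*})\|^2\le\varepsilon$.

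Finally I would read off the two complexity figures. Because $w=O(\varepsilon^{-1/2})$, \cref{thm:3} (with $g(\bx_1)=0$) bounds the total number of prox-grad operations by $O(w^2)=O(\varepsilon^{-1})$. Each such operation re-evaluates $\nabla S_{t,w}$ at the current point, which averages $w$ stochastic gradients and therefore costs $w$ oracle queries; multiplying gives $O(w^2)\cdot O(w)=O(w^3)=O(\varepsilon^{-3/2})$ SFO calls, matching the rate obtainable via variance reduction.
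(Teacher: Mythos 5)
Your proposal follows essentially the same route as the paper's own proof: the pathwise regret bound of \cref{thm:1}, a Jensen-type transfer from $\mathbb{E}\|\nabla S_{t,w}(\bx_{t})\|^{2}$ to $\mathbb{E}\|\nabla f(\bx_{t})\|^{2}$ averaged over $t\in\{w,\dotsc,T\}$, the identical calibration yielding $\mathbb{E}\|\nabla f(\bx_{t_*})\|^{2}\le 4(\delta^{2}+c)/w^{2}\le\varepsilon$, and \cref{thm:3} combined with the $w$ gradient samples per prox-grad update to get the $O(\varepsilon^{-1})$ and $O(\varepsilon^{-3/2})$ counts. The coupling you rightly flag between $\bx_{t}$ and the reused seeds $\omega_{t-w+1},\dotsc,\omega_{t-1}$ inside its own window is present in the paper's argument too, which simply asserts $\mathbb{E}\left(\nabla f_{t}(\bx_{t})\right)=\mathbb{E}\bigl(\tfrac{1}{w}\sum_{i=t-w+1}^{t}\nabla f_{i}(\bx_{t})\bigr)$ without comment, so your treatment is no less rigorous than the published one.
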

Note that the complexities reported in \cref{cor:2} match those obtained for the state-of-the-art \textit{Prox-SpiderBoost} method proposed by \cite{WJZLT19}, but under a different procedure using more stringent assumptions (boundedness of $f$ and that $V_w [T]$ is $O(T)$).
We stress that the Prox-SpiderBoost method is only applicable to stochastic problems, and as such, it has no online guarantees, unlike \cref{alg:3}.
%\PM{Can we say here whether our assumptions are less stringent?
%Specifically, how would we answe a reviewer saying ``why don't you use Spider''?
%Maybe stress (if this is indeed the case) that Spider has no online guarantees?}

The proofs of \cref{thm:1,thm:3}, and of  \cref{cor:2}, are deferred to the supplemental.

%----------------------------------------------------------------------
%%% STOCHASTIC
%----------------------------------------------------------------------
\section{Stochastic time-smoothed online prox-grad method}
\label{sec:4}
%----------------------------------------------------------------------
%%% PRELIMS
%----------------------------------------------------------------------
% !TEX root = ../Main.tex

%\subsection{Preliminaries}
\subsection{Method and Analysis}
% !TEX root = ../ONCVX.tex
Moving forward from the deterministic guarantees of \cref{alg:3}, we proceed to consider a more flexible framework that only posits access to a \acdef{SFO}. 
%{\color{blue} The difficulty in this model is two-folds: as before, the function and its gradient can change in time by the adversary, however, now these can only be accessed by an imperfect stochastic oracle. 
%While an online model can usually be manipulated to formulate a stochastic model due to the flexibility in  the adversarial mechanism, the vice versa does not apply, as stochastic models significantly limit the stochastic oracle behavior in the long run (MAYBE EXAMPLE IN APP).}
%}
%To the best of our knowledge, this is the first algorithm with theoretical guarantees for online non-convex constrained, or regularized, optimization with noisy gradients.
%Indeed, the difficulty that accompanies the analysis of this highly complex setting, as will be demonstrated by the analysis ahead, is nontrivial.
Specifically, following \cite{NJLS09}, we assume that it is possible to generate an \acs{iid} sequence of random seeds $\omega_1,\omega_2,\ldots,$ that are concurrently used as input to an \ac{SFO} as follows:
\begin{defin} [Stochastic first-order oracle]
	\label{def:1}
	A \acdef{SFO} is a function $\mathcal{S}_{\sigma}$ such that, given a point $\bx\in\R^{n}$, a random seed $\omega$, and a smooth function $h\colon\real^{n}\to\real$ satisfies:
%	Given a point $(\bx,\omega)\in\real^n\times \Omega$ and a function $h:\real^n\rightarrow\real$, , the stochastic first-order oracle $\mathcal{S}_\sigma$ satisfies that
	\begin{enumerate}
		\item $\mathcal{S}_{\sigma}(\bx;\omega,h)$ is unbiased relative to $ \nabla h (\bx)$: $\mathbb{E} \left( \mathcal{S}_{\sigma}(\bx;\omega,h) - \nabla h (\bx) \right) = 0 $;
		\item $\mathcal{S}_{\sigma}(\bx;\omega,h)$ has variance bounded by $\sigma>0$: $\mathbb{E} \left(\| \mathcal{S}_{\sigma}(\bx;\omega,h) - \nabla h (\bx) \|^2 \right) \leq \sigma^2$.
	\end{enumerate}
\end{defin}
With all this hand, the heuristics of the proposed stochastic prox-grad method are as follows:
(i) $f_t$ is determined;
(ii) successive \ac{SFO} queries generate a noisy descent process in an inner loop until a $\delta/w$-stationary point is reached.
In detail, the algorithm is presented in pseudocode form below:

\begin{algorithm}[htbp]
	\caption{Time-smoothed online stochastic prox-grad method}
	\label{alg:4}
	\textbf{Input.} $\bx_1\in\real^n$, $\eta\in (0,1/L) $, $w\in [T]$, $\delta>0$.\\
	\textbf{Initialization.} $\tilde{\nabla} S_{i,w} (\bx_1) = \bo $ for all $i\leq 0$.\\
	\textbf{General step.} For any $t=1,2, \ldots, T$ do:
	\begin{enumerate}
		\item Function is updated to $f_t:\real^n\rightarrow \real$;
		\item Sample $\tilde{\nabla} f_t (\bx_t) \leftarrow \mathcal{S}_{\sigma/w} (\bx_t;\omega, f_t)$;
		\item Set $\tilde{\nabla} S_{t,w} (\bx_t) = \tilde{\nabla}S_{t-1,w} (\bx_t) + \frac{1}{w} (\tilde{\nabla} f_t (\bx_t) - \tilde{\nabla} f_{t-w} (\bx_t) )$;
		\item Set $\by^1_t = \bx_t $, $G_t^1 = \tilde{\nabla} S_{t,w} (\bx_t)$, $k=1$;
		\item While $ \left\| \mathcal{P}^{g}_{ \eta } \left( \by^k_t; G_t^k \right) \right\| > \delta/w$ do:
		\begin{enumerate}
			\item Update $ \by^{k+1}_t = \argmin_{\bz\in\real^n} g(\bz) +\langle G_t^k, \bz-\by^k_t \rangle + \frac{1}{2\eta} \| \bz - \by^k_t\|^2$;
			\item Sample $\tilde{\nabla} f_i (\by^{k+1}_t) \leftarrow \mathcal{S}_{\sigma/w} (\by^{k+1}_t; \omega, f_i)$ for any $i=t-w+1,\ldots,t$;
			\item Set $G_t^{k+1} = \frac{1}{w}\sum_{i=t-w+1}^t \tilde{\nabla} f_i (\by^{k+1}_t) $;
			\item Set $k \leftarrow k+1$;
		\end{enumerate}
		\item Set $\bx_{t+1} = \by^k_t $ and $\tilde{\nabla} S_{t} (\bx_{t+1}) =G_t^{k}$.
	\end{enumerate}
\end{algorithm}
The process of \cref{alg:4} might be better understood by comparing  it to offline stochastic variance reduction methods  (SVR); see e.g., \cite{FLLZ18,MT19,WJZLT19,YSC19}, and references therein.
For these methods, which usually implement a non-diminishing step-size policy in the non-convex setting, a batch-size variance relation is required in order to achieve the methods' guarantees.

 \cref{alg:4} takes a different approach in this context by, instead of stating  this connection in the analysis, it explicitly links the batch-size ($w$ mimics the role of the batch-size)  to  the variance of the SFO in the scheme itself.
The affinity of  \cref{alg:4} to   SVR methods is further expressed when  considering its guarantees in the offline scenario of $f_t\equiv f$.
Then, \cref{alg:4} achieves the best known SFO complexity as that obtained by SVR methods; see  our \cref{sec:41} for additional details.\\

Before stating \cref{alg:4}'s guarantees, let us  first define the algorithm's natural filtration: 
For all $t\geq 1$, the filtration $\mathcal{F}_{t}$ includes all gradient feedback up to, but not including, the execution of step 2 at stage $t$.
In particular, it includes $f_t$, $\bx_t$ and $\tilde{\nabla}S_{t-1} (\bx_t)$, but it does not include $\tilde{\nabla} f_t (\bx_t)$.
% we provide a more comprehensive definition in the analysis in \revise{the supplement}.
%\PM{Is it necessary to spend so much time on the filtration? Putting it in a definition environment makes it sould central to our considerations \textendash\ is this intended?}
%\cref{sec:4a}.
% \begin{defin}[Filtration]
%	\label{def:2}
%	For all $t\geq 1$, the filtration $\mathcal{F}_{t}$ includes all gradient feedback up to, but not including, the execution of step 2 at stage $t$.
%	In particular, it includes $f_t$, $\bx_t$ and $\tilde{\nabla}S_{t-1} (\bx_t)$, but it does not include $\tilde{\nabla} f_t (\bx_t)$.
%	
%	For all $t\geq 1$ and all $k\geq 1$, the filtration $\mathcal{F}_{t,k}$ includes all gradient feedback up to, but not including, the execution of the $k$-th iteration of step 5(b) at time $t$.
%	In particular, it contains $\mathcal{F}_t$, and includes $\by_t^{k}, G_t^k,$ and $\by_t^{k+1}$, but it does not include $\{ \tilde{\nabla} f_i (\by^{k+1}_t)\}_{i=t-w}^t$, $G_t^{k+1}$.
%\end{defin}
%Since $\tau = 1$ means that the algorithm just skipped the $t$-th iteration, we all make the assumption hereafter that $\tau_t \geq 2$ for any $t\in [T]$ (almost surely).
%When we will analyze properties of the inner loop in step 5 we will sometimes assume that the filtration $\mathcal{F}_{t}$ is given, and will omit it inside the related proofs to ease notational burden.

%We first state our results, and then provide the lengthy proofs. 

With all this in hand, we now state our main results.
Denote by $\tau_{t}$ the number of times the condition in step 5 at $t$-th iteration is checked, that is the number of prox-grad operations at the $t$-th iteration, and let $\tau = \sum_{t\in [T]} \tau_t$.
We begin by establishing that \cref{alg:4} almost surely executes a finite number of prox-grad operations provided that $\delta$ is not too small.
\begin{theorem}[Oracle queries]
	\label{thm:4}
	Let $t\in [T]$ and let the filtration $\mathcal{F}_{t}$ be given.
	Suppose that the inputs $\delta$ and $\eta$ satisfy that
	\begin{equation}
	\label{eq:23}
	\delta^2 > \dfrac{2\sigma^2}{\eta \left( 1- \eta L\right) }.
	\end{equation}
	Then $\tau_t$ and $\tau$ are almost surely finite, and 
	\begin{align*}
	\mathbb{P} (\tau_t > K) &\leq\frac{(h^1_t + M)w^2}{2\left( \eta\left( 1 - \eta L\right)\delta^2 - 2 \sigma^2 \right) K} = O(1/K), \qquad \forall K\geq 1.
	\end{align*}
\end{theorem}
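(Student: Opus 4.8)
The plan is to treat the inner loop at a fixed stage $t$ as a stochastic descent process on the time-smoothed composite potential $\Phi_t(\bx) := S_{t,w}(\bx) + g(\bx)$, and to show that this potential has a strictly negative conditional drift on every iteration at which the loop fails to terminate. Since $g \ge 0$ and $|f_i| \le M$ give $|S_{t,w}| \le M$, the potential is bounded below by $-M$, while its initial value is $h^1_t := \Phi_t(\by^1_t)$; hence the total admissible decrease along the loop is at most $h^1_t + M$. A negative drift against a bounded range then forces termination, and quantifying this via a supermartingale/Markov argument will produce the $O(1/K)$ tail and the almost-sure finiteness.

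First I would establish a one-step sufficient-decrease inequality for the prox-grad update $\by^{k+1}_t = \prox_{\eta g}(\by^k_t - \eta G^k_t)$. Combining the $L$-smoothness of $S_{t,w}$ (an average of $L$-smooth functions, hence $L$-smooth) with the $\tfrac1\eta$-strong convexity of the prox subproblem minimized at $\by^{k+1}_t$, and writing $\mathcal{P}^k_t := \mathcal{P}^g_\eta(\by^k_t; G^k_t)$ together with the oracle error $e^k_t := \nabla S_{t,w}(\by^k_t) - G^k_t$, one obtains
\[
\Phi_t(\by^{k+1}_t) \le \Phi_t(\by^k_t) - \frac{\eta(2-\eta L)}{2}\|\mathcal{P}^k_t\|^2 - \eta\langle e^k_t, \mathcal{P}^k_t\rangle,
\]
and, after a Young split of the inner-product term, a bound of the form $\Phi_t(\by^{k+1}_t) \le \Phi_t(\by^k_t) - \tfrac{\eta(1-\eta L)}{2}\|\mathcal{P}^k_t\|^2 + \tfrac{\eta}{2}\|e^k_t\|^2$. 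Next I would turn this into a drift estimate. On any iteration at which step 5 does not terminate, the test guarantees $\|\mathcal{P}^k_t\|^2 > \delta^2/w^2$ pathwise. Conditioning on the history up to (but not including) the draw of $G^k_t$, the unbiasedness of the \ac{SFO} annihilates the inner product against the deterministic exact-gradient residual $\overline{\mathcal{P}}^k_t := \mathcal{P}^g_\eta(\by^k_t;\nabla S_{t,w}(\by^k_t))$, while the non-expansiveness of $\prox_{\eta g}$ gives $\|\mathcal{P}^k_t - \overline{\mathcal{P}}^k_t\| \le \|e^k_t\|$; together with the variance bound for $\mathcal{S}_{\sigma/w}$ averaged over the length-$w$ window this yields $\mathbb{E}[\langle e^k_t,\mathcal{P}^k_t\rangle \mid \cdot] \le \sigma^2/w^2$. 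Combining, the conditional expected decrease per non-terminating iteration is at least a fixed positive multiple of $\tfrac{1}{w^2}\big(\eta(1-\eta L)\delta^2 - 2\sigma^2\big)$, a quantity rendered strictly positive exactly by hypothesis \cref{eq:23}.

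Finally, I would convert this drift into the tail bound. Let $\tau_t$ be the first $k$ at which the test fails; it is a stopping time for the filtration generated by the successive oracle draws. Using the drift estimate, the shifted potential $\Phi_t(\by^k_t) + \tfrac{c}{w^2}\,k$, with $c$ a positive multiple of $\eta(1-\eta L)\delta^2 - 2\sigma^2$, is a supermartingale when stopped at $\tau_t$; optional stopping over the first $K$ iterations, together with $\Phi_t \ge -M$ and $\Phi_t(\by^1_t) = h^1_t$, gives $\mathbb{P}(\tau_t > K) \le (h^1_t+M)\,w^2/(cK) = O(1/K)$. Letting $K\to\infty$ yields $\mathbb{P}(\tau_t = \infty)=0$, so $\tau_t$ is a.s.\ finite, and $\tau = \sum_{t=1}^T \tau_t$, a finite sum of a.s.-finite terms, is a.s.\ finite as well.

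The main obstacle I anticipate is the coupling between the termination test and the noise: the loop decides whether to continue at iteration $k$ using the same fresh gradient $G^k_t$ whose error $e^k_t$ must be averaged to zero, so the event ``the loop continues'' is \emph{not} measurable with respect to the $\sigma$-field against which $e^k_t$ is centered. Reconciling these requires conditioning before each fresh draw, using the non-expansiveness inequality $\|\mathcal{P}^k_t - \overline{\mathcal{P}}^k_t\|\le\|e^k_t\|$ to transfer the pathwise threshold on $\mathcal{P}^k_t$ into a quantity controllable under that earlier $\sigma$-field, and an optional-stopping argument in place of a naive telescoping-in-expectation (invalid here, since the number of summed terms is itself random and correlated with the summands). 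A secondary subtlety is the special status of the first inner gradient $G^1_t$, formed by the recursive update in step 3 rather than a fresh window average; conditioning on the given filtration $\mathcal{F}_t$ isolates this term so that the uniform per-step drift estimate still applies throughout the loop.
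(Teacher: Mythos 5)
Your proposal follows essentially the same route as the paper's proof: the same inexact prox-grad sufficient-decrease inequality, the same conditional drift estimate (this is exactly the paper's \cref{lem:9}, proved via your decomposition through the exact-gradient prox point $\hat{\by}^k_t = T^g_\eta(\by^k_t;\nabla S_{t,w}(\by^k_t))$, prox non-expansiveness, unbiasedness, and the $\sigma^2/w^2$ variance of the window average), the same positive constant $\eta(1-\eta L)\delta^2 - 2\sigma^2$ under \cref{eq:23}, and your supermartingale/optional-stopping step is the formalized version of the paper's telescoping of $\mathbb{E}\left( h^k_t - h^{k+1}_t\right)$ over the stopped sequence (with the convention $\by^k_t = \by^{\tau_t}_t$ for $k\geq\tau_t$), which yields $h^1_t + M \geq \alpha K\,\mathbb{P}(\tau_t > K)$ and then a.s.\ finiteness exactly as you describe. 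The only quibble is the stated direction $\mathbb{E}\left[ \langle e^k_t,\mathcal{P}^k_t\rangle \mid \cdot\right] \leq \sigma^2/w^2$ — your decrease inequality needs the lower bound $\geq -\sigma^2/w^2$ — but your own decomposition gives the two-sided estimate, so this is a harmless slip, and your closing remarks on the continuation event's measurability and the special status of $G^1_t$ (handled in the paper by conditioning on $\mathcal{F}_t$ and stating \cref{lem:9} for $k\geq 2$) are if anything more careful than the paper's treatment.
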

%{\color{blue}We emphasize that the bound in \cref{thm:4} regards $T$ and $w$ as predetermined.}
%\PM{Do we need to emphasize this?
%Aren't we brining more attention to this than is deserved (possibly opening ourselves up for attack)?}

Next we provide a tight bound on the expected local regret in terms of $V_w [T]$; recall that under the standard assumptions of bounded feasible domain or Lipschitz continuity of $f_t$, $V_w [T]$ is bounded by $O(T)$, in which case we have that $\mathbb{E} \left[ \reg_w (T) \right] $ achieves the optimal local regret bound of $O\left(\frac{T}{w^2} \right)  $.
\begin{theorem}[Local regret minimization]
	\label{thm:2}
	\cref{alg:4} enjoys the average local regret bound 
	\begin{align*}
	\mathbb{E} \left[ \reg_w (T) \right] \leq 2\left( \frac{T}{w^2} \right) \left( \delta^2 + 7\sigma^2\right) + \frac{6}{w^2} V_w [T].
	\end{align*}
\end{theorem}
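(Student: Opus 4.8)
The plan is to control $\reg_w(T)$ term by term: I would compare the \emph{true} prox-residual $\mathcal{P}^g_{\eta}(\bx_t;\nabla S_{t,w}(\bx_t))$ appearing in the regret against the \emph{stochastic} residual that the inner loop of \cref{alg:4} actually drives below $\delta/w$, and only then take expectations over the SFO randomness, organized through the filtration $\mathcal{F}_t$. Throughout I rely on the per-stage stopping bound, which is valid because each inner loop terminates almost surely (cf. \cref{thm:4}).

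The one structural ingredient I would isolate first is that, since $\prox_{\eta g}$ is nonexpansive, the map $\bd\mapsto\mathcal{P}^g_{\eta}(\bx;\bd)$ is $1$-Lipschitz: $\|\mathcal{P}^g_{\eta}(\bx;\bd_1)-\mathcal{P}^g_{\eta}(\bx;\bd_2)\|\le\|\bd_1-\bd_2\|$ for every $\bx$. Next comes the core decomposition. Fixing $t$ and writing $G^1_t=\tilde{\nabla}S_{t,w}(\bx_t)$, I would use that $\bx_t$ is the terminal iterate of stage $t-1$, so the stopping rule gives $\|\mathcal{P}^g_{\eta}(\bx_t;\tilde{\nabla}S_{t-1,w}(\bx_t))\|\le\delta/w$, while the recursive update in Step 3 gives $G^1_t-\tilde{\nabla}S_{t-1,w}(\bx_t)=\tfrac1w(\tilde{\nabla}f_t(\bx_t)-\tilde{\nabla}f_{t-w}(\bx_t))$. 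Applying the Lipschitz bound twice then yields
\[
\left\|\mathcal{P}^g_{\eta}(\bx_t;\nabla S_{t,w}(\bx_t))\right\|
\le \frac{\delta}{w}
+ \underbrace{\tfrac1w\left\|\tilde{\nabla}f_t(\bx_t)-\tilde{\nabla}f_{t-w}(\bx_t)\right\|}_{B_t}
+ \underbrace{\left\|\nabla S_{t,w}(\bx_t)-G^1_t\right\|}_{A_t},
\]
so that a weighted Young's inequality with weights $1/2,1/3,1/6$ reduces the theorem to the two variance estimates $\sum_t\mathbb{E}[A_t^2]$ and $\sum_t\mathbb{E}[B_t^2]$ (these weights producing the advertised prefactors $2$ and $6$).

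For the stochastic accounting I would exploit that the recursion keeps $G^1_t=\tfrac1w\sum_{i=t-w+1}^{t}\tilde{\nabla}f_i(\bx_t)$, where the $w-1$ older summands were drawn at stage $t-1$ at the point $\bx_t=\by^{k}_{t-1}$ (hence are $\mathcal{F}_t$-measurable) and only $\tilde{\nabla}f_t(\bx_t)$ is freshly sampled at stage $t$; all are conditionally unbiased with variance $\le\sigma^2/w^2$ and mutually independent. Conditioning on $\mathcal{F}_t$ and invoking the tower rule, the cross terms vanish and the variances add, giving $\mathbb{E}[A_t^2]\le\sigma^2/w^3$ and $\mathbb{E}[B_t^2]\le\tfrac{2}{w^2}\mathbb{E}\|\nabla f_t(\bx_t)-\nabla f_{t-w}(\bx_t)\|^2+O(\sigma^2/w^4)$. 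Summing over $t$, bounding $w^{-3},w^{-4}\le w^{-2}$ to collect all noise into the $\sigma^2$ term, and bounding $\sum_t\|\nabla f_t(\bx_t)-\nabla f_{t-w}(\bx_t)\|^2$ by $V_w[T]$ then produces the three stated terms.

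I expect the delicate part to be the stochastic bookkeeping rather than any single estimate: certifying that $G^1_t$ is exactly the fresh-at-$\bx_t$ window average $\tfrac1w\sum_{i=t-w+1}^t\tilde{\nabla}f_i(\bx_t)$ — including the degenerate case in which the stage-$(t-1)$ inner loop performs no update and the samples are carried over — and partitioning its summands into $\mathcal{F}_t$-measurable versus fresh so that conditional unbiasedness annihilates the cross terms and the per-sample variance $\sigma^2/w^2$ accumulates to only $O(\sigma^2/w^2)$ per stage instead of $O(\sigma^2)$. A secondary subtlety is the passage $\sum_t\|\nabla f_t(\bx_t)-\nabla f_{t-w}(\bx_t)\|^2\le V_w[T]$: since the evaluation points $\bx_t$ vary with $t$ whereas $V_w[T]$ takes a single supremum point over $\dom g$, this step should be justified through the uniform supremum in the definition of $V_w[T]$ (and is in any event $O(T)$ whenever the gradients are bounded).
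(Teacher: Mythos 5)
Your overall architecture coincides with the paper's: the $1$-Lipschitzness of $\bd\mapsto\mathcal{P}^g_\eta(\bx;\bd)$ is exactly \cref{lem:2} (proved via nonexpansivity of $\prox_{\eta g}$), your three-term split $\delta/w + B_t + A_t$ is the paper's combination of the inner-loop stopping rule, the Step-3 recursion $\tilde{\nabla}S_{t,w}(\bx_t)=\tilde{\nabla}S_{t-1,w}(\bx_t)+\tfrac1w(\tilde{\nabla}f_t(\bx_t)-\tilde{\nabla}f_{t-w}(\bx_t))$, and the true-versus-stochastic residual comparison, and the final pass (Young's inequality, per-sample variance $\sigma^2/w^2$ from the $\mathcal{S}_{\sigma/w}$ oracle, then $V_w[T]$) is the same as the paper's, with your weighted $(1/2,1/3,1/6)$ split being a mildly sharper bookkeeping than the paper's two-stage $2(\cdot)+2(\cdot)$ followed by $(a+b+c)^2\le 3(a^2+b^2+c^2)$. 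Your caveat about the supremum in $V_w[T]$ is fair and is handled no more carefully in the paper itself.

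There is, however, one step that fails as written: the claim $\mathbb{E}[A_t^2]\le\sigma^2/w^3$ via conditional unbiasedness and mutual independence of the $w$ summands of $G^1_t$. First, your bookkeeping is internally inconsistent: you correctly observe that the $w-1$ carried-over samples $\tilde{\nabla}f_i(\bx_t)$, $i\le t-1$, are $\mathcal{F}_t$-measurable, but then an $\mathcal{F}_t$-measurable deviation cannot be conditionally unbiased given $\mathcal{F}_t$ unless it vanishes; conditioning on $\mathcal{F}_t$ freezes those deviations, so their cross terms do not cancel. Worse, their unconditional second moments are contaminated by selection bias: those samples constitute exactly the vector $G^{\tau_{t-1}}_{t-1}$ that \emph{passed} the stopping test $\|\mathcal{P}^g_\eta(\by^{\tau_{t-1}}_{t-1};G^{\tau_{t-1}}_{t-1})\|\le\delta/w$ at a stopping time $\tau_{t-1}$, and conditioned on that event the oracle noise is no longer zero-mean. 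Second, \cref{def:1} grants no independence across the window: in Step 5(b) a single seed $\omega$ is shared by all $i=t-w+1,\dotsc,t$, so within-batch correlations are permitted and variances need not add with a $1/w$ gain. The repair is immediate and is what the paper does: abandon the variance-reduction claim and bound $\mathbb{E}[A_t^2]\le\sigma^2/w^2$ by Jensen's inequality alone (\cref{lem:5}), which requires neither independence nor cross-term cancellation. Since the theorem's noise term is $O(T\sigma^2/w^2)$, the extra factor $w$ is harmless, and with this substitution your argument goes through and reproduces the paper's proof of \cref{thm:2}.
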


The local regret bound established in \cref{thm:2}, and the almost sure termination in finite time proved in \cref{thm:4}, leave the question of the number of prox operation still unattended.
To answer this nontrivial question, we require more control of the random processes originating from the SFO in the form of the following assumption on the noise.
\begin{assumption}
	\label{ass:3}
	Given any point $(\bx,\omega)\in\real^n\times \Omega$ and a function $h:\real^n\rightarrow\real$, the stochastic first-order oracle $\mathcal{S}_\sigma$ satisfies that $\| \mathcal{S}_{\sigma}(\bx;\omega,h) - \nabla h (\bx) \| \leq \sigma;$
\end{assumption}
\Cref{ass:3} is not uncommon in the stochastic setting, even in convex problems, see e.g., \cite{KLBC19,LO19,JNN19}, and references therein.
We emphasize that \cref{thm:2,thm:4} \textit{do not} require, \textit{nor} assume, that \cref{ass:3} holds true.

The next theorem states that \cref{alg:4} executes $O( w^2)$ prox operations and  $O(w^3)$ SFO calls. 
\begin{theorem}[Iteration bound]
	\label{thm:5}
	Suppose that \cref{ass:3} holds true, and that $\eta\in (0,1/(L+1))$, $\delta^2 >  \sigma^2/\eta(1- \eta (L+1))$.
	Then  the number of SFO calls is $O(w\tau)$ with
	\begin{equation}
	\label{eq:21}
	\tau = \sum_{t=1}^{T} \tau_t \leq \frac{2 w^2 (g(\bx_1) + 2 M )}{(1- \eta (L+1))\eta\delta^2 - \sigma^2 } = O(w^2).
	\end{equation}
\end{theorem}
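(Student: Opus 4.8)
The plan is to run a sufficient-decrease argument on the composite objective $S_{t,w}+g$ along the inner prox-gradient loop, and then telescope the resulting per-step decrements over all inner iterations and all stages. Write $h^k_t = S_{t,w}(\by^k_t)+g(\by^k_t)$ for the composite value at the $k$-th inner iterate of stage $t$, and recall that step 5(a) is exactly the proximal step $\by^{k+1}_t = \prox_{\eta g}(\by^k_t - \eta G^k_t)$, so that $\by^{k+1}_t - \by^k_t = -\eta\,\mathcal{P}^g_\eta(\by^k_t;G^k_t)$. The loop guarantees that every executed step has $\|\mathcal{P}^g_\eta(\by^k_t;G^k_t)\| > \delta/w$, i.e.\ a step length $\|\by^{k+1}_t-\by^k_t\| > \eta\delta/w$. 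The whole argument hinges on converting this into a sure per-step decrease of order $1/w^2$, and on showing that the summed decrease stays bounded by a $T$-independent constant of order $g(\bx_1)+M$.

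First I would establish the inexact prox-gradient descent inequality. The first-order optimality of the $1/\eta$-strongly convex prox subproblem together with the convexity of $g$ gives $g(\by^k_t) - g(\by^{k+1}_t) \ge \langle G^k_t, \by^{k+1}_t-\by^k_t\rangle + \tfrac1\eta\|\by^{k+1}_t-\by^k_t\|^2$, while the $L$-smoothness of $S_{t,w}$ (an average of $L$-smooth $f_i$'s) yields the standard quadratic upper bound on $S_{t,w}(\by^{k+1}_t)$. Adding the two and collecting terms produces
\[
h^{k+1}_t \le h^k_t + \langle \nabla S_{t,w}(\by^k_t) - G^k_t,\, \by^{k+1}_t-\by^k_t\rangle - \Bigl(\tfrac1\eta - \tfrac{L}{2}\Bigr)\|\by^{k+1}_t-\by^k_t\|^2 .
\]
Here \Cref{ass:3} is essential: since the oracle is $\mathcal{S}_{\sigma/w}$ and $G^k_t$ averages $w$ such samples, the error is controlled \emph{surely} as $\|\nabla S_{t,w}(\by^k_t) - G^k_t\| \le \sigma/w$ (this is precisely the distinction from \cref{thm:4}, where only a variance bound was available and finiteness could be shown only almost surely). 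Splitting the inner product by Young's inequality and inserting $\|\by^{k+1}_t-\by^k_t\| > \eta\delta/w$ then yields a sure per-step decrease of order $\tfrac{1}{w^2}\bigl[(1-\eta(L+1))\eta\delta^2 - \sigma^2\bigr]$, strictly positive exactly under the hypothesis $\delta^2 > \sigma^2/(\eta(1-\eta(L+1)))$; the shift from $L$ to $L+1$ and the restriction $\eta<1/(L+1)$ are the book-keeping cost of absorbing the noise into the descent coefficient.

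Next I would sum these decrements. Within stage $t$ the bound telescopes to $h^1_t - h^{\tau_t+1}_t = [S_{t,w}(\bx_t)+g(\bx_t)] - [S_{t,w}(\bx_{t+1})+g(\bx_{t+1})]$, and summing over $t$ and reindexing, the interior cancels and one is left with the initial term $g(\bx_1)$ (here the $g$-values telescope and $g\ge 0$), the terminal term (bounded below via $|f_t|\le M$), and the window-slide corrections $\tfrac1w[f_t(\bx_t)-f_{t-w}(\bx_t)]$ generated each time the averaging window advances by one. Bounding these through $|f_t|\le M$ and dividing by the per-step decrease gives $\tau \le 2w^2(g(\bx_1)+2M)/[(1-\eta(L+1))\eta\delta^2-\sigma^2] = O(w^2)$. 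Finally, since each inner prox step triggers exactly $w$ oracle evaluations in step 5(b), the SFO count is $O(w\tau)=O(w^3)$.

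The step I expect to be the genuine obstacle is controlling the telescoped decrease so that it remains a constant of order $g(\bx_1)+M$ rather than growing with $T$: the window slides inject objective ``energy'' of order $M/w$ at every stage, and one must argue that these contributions collapse under the telescoping (they vanish identically in the offline case $f_t\equiv f$, and are precisely what the value bound $M$ is there to tame) rather than accumulating. A secondary technical point is the first inner gradient $G^1_t=\tilde{\nabla} S_{t,w}(\bx_t)$, which is formed incrementally from stored samples rather than by a fresh batch at $\bx_t$; one must check that it still obeys the $\sigma/w$ sure error bound (it does, since the carried-over $\tilde{\nabla}S_{t-1,w}(\bx_t)$ was itself a fresh batch at $\bx_t=\by^{\text{last}}_{t-1}$), so that the descent inequality applies uniformly to every inner step.
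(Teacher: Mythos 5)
Your proposal follows the paper's proof essentially step for step: a sure per-inner-iteration sufficient decrease of $S_{t,w}+g$ obtained from the descent lemma, the prox-step optimality, Young's inequality, the sure error bound $\|G^k_t-\nabla S_{t,w}(\by^k_t)\|\le\sigma/w$ supplied by \cref{ass:3}, and the stopping rule $\|\mathcal{P}^g_\eta(\by^k_t;G^k_t)\|>\delta/w$, followed by telescoping over $k$ and then over $t$ using $|f_t|\le M$ and $g\ge 0$ to reach exactly $\tau\le 2w^2(g(\bx_1)+2M)/\bigl[(1-\eta(L+1))\eta\delta^2-\sigma^2\bigr]$ and the $O(w\tau)$ SFO count. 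The only deviations are cosmetic \textendash\ you retain the sharper coefficient $\tfrac1\eta-\tfrac L2$ before applying Young where the paper uses $\tfrac{1-\eta L}{2\eta}$, and you explicitly check the sure error bound for the carried-over first inner gradient $G^1_t$, a point the paper leaves implicit.
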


\begin{remark}
%By choosing $\delta$ not too small, i.e. $\delta^2 > \sigma^2\max \left\lbrace\eta^2, 2 \right\rbrace/ (\eta (1- \eta L))$,
Under the conditions of \cref{thm:5},  both \cref{thm:4} and \cref{thm:5} hold true.
\end{remark}

\subsection{Implications to Offline Stochastic Optimization}
% !TEX root = ../ONCVX.tex
\label{sec:41}
This section considers the reduction of our model to an offline stochastic non-convex composite optimization problem by examining our results when $f_t \equiv f$ for any $t\in [T]$. 
In this scenario, where the goal is to obtain an $\varepsilon$-stationary point $\bx_*\in\real^n$  satisfying that $\left\| \mathcal{P}(\bx_*; \nabla f (\bx_*))\right\|^2 \leq \varepsilon$ (cf. \cite[Ch. 2]{B17}), our sliding average $S_{t,w} (\bx)$ is reduced to the objective function itself, and the local regret measure $\reg_w (T)$ is  reduced to the standard  sum of prox-residuals  in the consecutive points generated by the algorithm.
\cref{alg:4} itself takes the form of a stochastic prox-grad type method in which $w$  calls to the SFO are used to approximate the  gradient at each iteration.
This resulting scheme bare some resembles to  variance reduction techniques appearing in \cite{MT19,WJZLT19,YSC19}, where here, $w$ seemingly takes the role of the batch-size, and the process of \cref{alg:4} enforces the relation between the SFO's variance and $w$.

The connection between  \cref{alg:4} and SVR methods is further supported by the  $O(M\sigma \varepsilon^{-3/2})$ SFO calls complexity guarantee for obtaining a $\varepsilon$-stationary point in expectation, which we will derive shortly. 
This complexity is currently the best known (sometimes written as $O(M\sigma \varepsilon^{-3})$ due to square-difference in the stationarity definition), and can only be obtained by SVR methods; see  the already mentioned \cite{ACDFSW19} for details.

%  Therefore, maybe unsurprisingly, From the local regret bound and the iterations bound we have established for \cref{alg:4}, we  will now derive an example for when it attains an $\varepsilon$-stationary point in expectation, with the  best known SFO oracle calls complexity in variance reduction methods of $O(M\sigma \varepsilon^{-3/2})$ (assuming $\varepsilon$ is sufficiently smaller than $\sigma$); see the aforementioned \cite{ACDFSW19,WJZLT19}.

Although obtained as a by-product, our offline-related result are of an independent interest and contribution, as, apart from providing a new connection between online learning and offline stochastic optimization, we also derive a new stochastic method with the best known guaranteess under  different model assumptions and procedure compared to the SVR literature.

It should be noted though that our assumptions, albeit standard in online optimization, are more restrictive compared to the related stochastic (offline) optimization literature (e.g., \cite{WJZLT19}), as the former facilitate guarantees, first and foremost, for our \textit{online} stochastic model. 
Indeed, methods for stochastic problems cannot address the adversarial online settings we study here. 
Notwithstanding, our complexity results suggest new scheme's design directions to explore in the development of (offline) stochastic methods, encouraging future study on the matter, that is unfortunately out of the scope if this paper.

Let us now derive the aforementioned guarantees, proofs are provided in the supplemental.

\begin{theorem}
	\label{thm:6}
	Let $\varepsilon>0$, and $t_*$ be chosen uniformly from $\{ w,w+1,\ldots,T \}$.
	Suppose that $ V_w [T] \leq c T /6 $ for some $c>0$.
	Then 
	$\mathbb{E} \left( \left\| \mathcal{P}(\bx_{t_*}; \nabla f (\bx_{t_*})) \right\|^2 \right) \leq \frac{2T \left( \delta^2 + 7\sigma^2 + c \right)}{(T-w) w^2}. $
\end{theorem}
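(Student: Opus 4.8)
The plan is to derive \cref{thm:6} directly from the local regret bound of \cref{thm:2} by exploiting the offline structure $f_t\equiv f$. First I would observe that for every $t\geq w$, each index $i\in\{t-w+1,\dots,t\}$ is positive, so the convention $f_i\equiv 0$ for $i\leq 0$ never activates; hence $S_{t,w}(\bx)=\tfrac1w\sum_{i=t-w+1}^{t} f(\bx)=f(\bx)$ identically on the domain of $g$. Consequently $\nabla S_{t,w}(\bx_t)=\nabla f(\bx_t)$ for all $t\geq w$, and the prox-residual appearing inside the local regret coincides on that range with the offline stationarity measure $\mathcal{P}^g_\eta(\bx_t;\nabla f(\bx_t))$ that the theorem targets.

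Next I would lower bound the local regret by its tail. Since every summand of $\reg_w(T)=\sum_{t=1}^{T}\|\mathcal{P}^g_\eta(\bx_t;\nabla S_{t,w}(\bx_t))\|^2$ is nonnegative, discarding the terms with $t<w$ yields $\reg_w(T)\geq\sum_{t=w}^{T}\|\mathcal{P}^g_\eta(\bx_t;\nabla f(\bx_t))\|^2$, using the identity from the previous step. Because $t_*$ is drawn uniformly from $\{w,\dots,T\}$ (a set of $T-w+1$ elements) independently of the SFO-driven trajectory $(\bx_t)_t$, I would condition on the algorithmic randomness, average over $t_*$, and then take the outer expectation via the tower property to get
\[
\mathbb{E}\big[\|\mathcal{P}^g_\eta(\bx_{t_*};\nabla f(\bx_{t_*}))\|^2\big]=\frac{1}{T-w+1}\sum_{t=w}^{T}\mathbb{E}\big[\|\mathcal{P}^g_\eta(\bx_t;\nabla f(\bx_t))\|^2\big]\leq\frac{1}{T-w+1}\,\mathbb{E}\big[\reg_w(T)\big].
\]

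Then I would substitute the bound of \cref{thm:2}, namely $\mathbb{E}[\reg_w(T)]\leq \tfrac{2T}{w^2}(\delta^2+7\sigma^2)+\tfrac{6}{w^2}V_w[T]$, and apply the hypothesis $V_w[T]\leq cT/6$, which gives $\tfrac{6}{w^2}V_w[T]\leq \tfrac{cT}{w^2}$. The numerator then collapses to $\tfrac{T}{w^2}(2\delta^2+14\sigma^2+c)$. Finally I would weaken this to the stated form: since $T-w+1\geq T-w$ and $2\delta^2+14\sigma^2+c\leq 2(\delta^2+7\sigma^2+c)$, both relaxations enlarge the right-hand side, yielding $\mathbb{E}[\|\mathcal{P}(\bx_{t_*};\nabla f(\bx_{t_*}))\|^2]\leq \tfrac{2T(\delta^2+7\sigma^2+c)}{(T-w)w^2}$, as claimed.

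The computation is essentially bookkeeping once the offline reduction is in place, so there is no serious analytic obstacle; the only points requiring care are structural. I would verify that $t_*$ is genuinely independent of the iterates $\bx_t$ so that the interchange of the two expectations (over the SFO noise and over $t_*$) is clean, and I would double-check the window boundary, since the identity $S_{t,w}=f$ holds only for $t\geq w$ \textendash\ this is precisely why $t_*$ is sampled from $\{w,\dots,T\}$ rather than all of $[T]$, and why truncating the regret sum at $t=w$ is both valid and necessary.
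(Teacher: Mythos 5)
Your proposal is correct and follows essentially the same route as the paper's proof: exploit $f_t\equiv f$ to identify $\nabla S_{t,w}$ with $\nabla f$, lower-bound $\reg_w(T)$ by the nonnegative tail sum over $t\in\{w,\dots,T\}$, average over the uniform draw of $t_*$, and invoke \cref{thm:2} together with $V_w[T]\leq cT/6$. If anything, your bookkeeping is slightly more careful at the margins \textendash\ you track the cardinality $T-w+1$ and the window boundary $t\geq w$ (where the convention $f_i\equiv 0$ for $i\leq 0$ would otherwise break the identity $S_{t,w}=f$) explicitly before relaxing to the stated $(T-w)$ denominator, whereas the paper writes the average with $1/(T-w)$ directly.
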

From \cref{thm:5} and \cref{thm:6} we obtain the desired guarantees. %an example for when \cref{alg:4} attains an $\varepsilon$-stationary point in expectation, and the number of SFO calls is bounded by $O(\sigma^2/\varepsilon^2) $.
%Note that with respect to $\varepsilon$, this is the best known rate (see e.g., \cite{DD19})
\begin{corollary}
	\label{cor:1}
	Let $\varepsilon>0$, and $t_*\in [T]$ be chosen uniformly from $\{ w,w+1,\ldots,T \}$.
	Suppose that $ V_w [T] \leq c T /6 $ for some $c>0$.
	If $T=2w$ and $w = \left\lceil 2\sqrt{(\delta^2 + 7\sigma^2 + c) / \varepsilon}\right\rceil $. 
	Then \cref{alg:4} achieves $\mathbb{E} \left( \left\| \mathcal{P}(\bx_{t_*}; \nabla f (\bx_{t_*})) \right\|^2 \right) \leq \varepsilon$.
	Additionally, under the conditions of \cref{thm:5} with $\delta^2 = 2 \eta \sigma^2/(1-\eta(L+1))$, \cref{alg:4} executes at most $O(M \sigma\varepsilon^{-3/2}) $ SFO calls. 
\end{corollary}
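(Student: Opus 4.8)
The plan is to obtain \cref{cor:1} by specializing the two quantitative guarantees already available for \cref{alg:4} in the offline regime, namely the stationarity estimate of \cref{thm:6} and the oracle/iteration bound of \cref{thm:5}, to the prescribed choices $T=2w$ and $w=\lceil 2\sqrt{(\delta^2+7\sigma^2+c)/\varepsilon}\rceil$. The stationarity claim will follow by a direct substitution, whereas the complexity claim requires tracking how the parameters $M$ and $\sigma$ enter the constants once $\delta^2$ is tied to $\sigma^2$.

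First I would establish the stationarity bound. Starting from \cref{thm:6}, which gives
\[
\mathbb{E}\big(\|\mathcal{P}(\bx_{t_*};\nabla f(\bx_{t_*}))\|^2\big)\leq \frac{2T(\delta^2+7\sigma^2+c)}{(T-w)w^2},
\]
I would substitute $T=2w$, so that $T-w=w$ and the right-hand side collapses to $4(\delta^2+7\sigma^2+c)/w^2$. The definition of $w$ guarantees $w\geq 2\sqrt{(\delta^2+7\sigma^2+c)/\varepsilon}$, hence $w^2\geq 4(\delta^2+7\sigma^2+c)/\varepsilon$; plugging this in yields $\mathbb{E}(\|\mathcal{P}(\bx_{t_*};\nabla f(\bx_{t_*}))\|^2)\leq\varepsilon$. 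This step is purely algebraic and presents no difficulty.

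For the oracle-complexity claim I would invoke \cref{thm:5}, under whose hypotheses the number of SFO calls is $O(w\tau)$ with $\tau$ bounded by $\frac{2w^2(g(\bx_1)+2M)}{(1-\eta(L+1))\eta\delta^2-\sigma^2}$. The crux is the specific choice $\delta^2=2\eta\sigma^2/(1-\eta(L+1))$: I would first verify that it meets the threshold $\delta^2>\sigma^2/(\eta(1-\eta(L+1)))$ demanded by \cref{thm:5}, and then observe that it renders $\delta^2=\Theta(\sigma^2)$ while making the denominator $(1-\eta(L+1))\eta\delta^2-\sigma^2$ proportional to $\sigma^2$. Consequently $\delta^2+7\sigma^2+c=\Theta(\sigma^2)$ (with $c$ and the remaining problem constants held fixed), so that $w=\Theta(\sigma\varepsilon^{-1/2})$. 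Feeding these scalings back, the numerator of $\tau$ contributes the factor $M$ and the denominator contributes $\sigma^{-2}$, giving $\tau=O(Mw^2/\sigma^2)=O(M\varepsilon^{-1})$; multiplying by the $O(w)=O(\sigma\varepsilon^{-1/2})$ per-iteration SFO cost then produces the claimed total of $O(M\sigma\varepsilon^{-3/2})$.

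The main obstacle is the bookkeeping in this last step: one must keep $M$, $\sigma$ and $\varepsilon$ as the live parameters while treating $L$, $\eta$, $g(\bx_1)$ and $c$ as fixed constants, and in particular confirm that the denominator stays bounded away from zero so that the $\sigma^{-2}$ scaling is an exact order and not merely an upper bound. Care is also needed to check that the admissibility conditions of \cref{thm:5} (the range of $\eta$ and the lower bound on $\delta^2$) remain compatible with the equality $\delta^2=2\eta\sigma^2/(1-\eta(L+1))$; once this is verified, combining the two displays completes the proof.
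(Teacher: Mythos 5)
Your first part coincides with the paper's own proof: substituting $T=2w$ and $V_w[T]\le cT/6$ into \cref{thm:6} gives $4(\delta^2+7\sigma^2+c)/w^2$, which is at most $\varepsilon$ by the choice of $w$; nothing to add there. For the complexity claim you also follow the paper's route (plug the prescribed parameters into \eqref{eq:21} and use $w=O(\sigma/\sqrt{\varepsilon})$), but the one step you defer \textendash\ verifying that $\delta^2=2\eta\sigma^2/(1-\eta(L+1))$ meets the threshold of \cref{thm:5} and keeps the denominator of \eqref{eq:21} bounded away from zero \textendash\ is precisely the step that fails if carried out against \eqref{eq:21} as printed. Substituting this $\delta^2$ gives $(1-\eta(L+1))\eta\delta^2-\sigma^2=(2\eta^2-1)\sigma^2$, and the hypothesis $\delta^2>\sigma^2/(\eta(1-\eta(L+1)))$ likewise reduces to $\eta^2>1/2$; since \cref{thm:5} requires $\eta<1/(L+1)$, this is impossible whenever $L\ge\sqrt{2}-1$, so the denominator is nonpositive and the bound vacuous. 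Your remark that the denominator is ``proportional to $\sigma^2$'' is true, but the proportionality constant is negative throughout the admissible range of $\eta$ in general.

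To be fair, this is an inconsistency in the paper rather than only in your sketch: the paper's own proof of \cref{cor:1} silently replaces the denominator of \eqref{eq:21} by $(1-\eta(L+1))\delta^2-\eta\sigma^2$ (i.e., it works as if the threshold were $\delta^2>\eta\sigma^2/(1-\eta(L+1))$), for which the prescribed $\delta^2$ yields exactly $\eta\sigma^2>0$ and hence $w\tau\le 2w^3\bigl(g(\bx_1)+3M\bigr)/\sigma^2=O(M\sigma\varepsilon^{-3/2})$. Under any corrected reading (e.g., $\delta^2=2\sigma^2/(\eta(1-\eta(L+1)))$, which makes the denominator of \eqref{eq:21} equal to $\sigma^2$), your bookkeeping $\tau=O(Mw^2/\sigma^2)=O(M\varepsilon^{-1})$ and $w\tau=O(M\sigma\varepsilon^{-3/2})$ goes through verbatim. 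So: same approach as the paper and a correct first half, but the second half is incomplete as written, because the compatibility check you postpone cannot in fact be ``verified'' for the stated $\delta^2$ \textendash\ it has to be repaired, either by rescaling $\delta^2$ or by correcting the constants in \eqref{eq:21}.
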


%----------------------------------------------------------------------
%%% Conclusions 
%----------------------------------------------------------------------
\section{Conclusions and future work}
%----------------------------------------------------------------------
%%% Conclusions and Future Work
%----------------------------------------------------------------------
% !TEX root = ../Main.tex

Our aim in this paper was to develop an online prox-grad methodology for stochastic non-convex online optimization problems with constraints and regularization (possibly non-smooth).
In this regard, the proposed framework achieves the min-max optimal bounds for local regret minimization while at the same time bounding the number of overall operator queries. 
From a top-down perspective, this departure from standard notions of regret suggests various extensions based on different notions of local regret, ranging from measures of stationarity in offline non-convex analysis, to proxies for constraint qualification in problems with sufficient regularity.
Additionally, our reductions to the offline stochastic setting suggest new and interesting schemes to address stochastic non-convex optimization problems.
We defer these questions to  future research.
%We have developed a unique methodology based on the prox-grad operator to tackle stochastic  nonconvex online optimization problems with constraints and regularization, to achieve the best local regret guarantees whilst bounding the number of operator calls. 
%From a top-down perspective, departing from the classical notion of regret to define local regret using a first-order optimality measure suggests various extensions based on different notions of local regret, originating from optimality measures in offline nonconvex analysis.

%**********************************************************************
%***    APPENDICES
%**********************************************************************
\appendix

%----------------------------------------------------------------------
%%% APP: TRAFFIC PROBLEM
%----------------------------------------------------------------------
\section{Motivating examples}
\label{app:TAP}
%----------------------------------------------------------------------
%%% APP: TAP
%----------------------------------------------------------------------
% !TEX root = ../Main.tex

\subsection{A conceptual approach for non-convex games}
We  extend here  the solution concept for non-convex $m$-player games with \textit{smoothed local equilibrium}  proposed by \cite{hazan_efficient_2017}  to be valid in our \textit{stochastic} \textit{composite} game setup.
We emphasize that the guarantees we present in this section are also valid for when each player only has access to a stochastic first-order oracle, making it closer to practical use.

To model the multi-player setting,  consider $m$  problems of the form \eqref{prob:1} corresponding to each of the players, where every player $i$ observes her online part of her objective function 
\begin{equation}
\label{eq:cost_game}
f_t^i (\bz) := f(\bx_t^1,\dots, \bx_t^{i-1}, \bz,\dots,\bx_t^m),
\end{equation}
and then decides on $\bx_{t+1}^i$. 

It is sometimes desirable to induce specific properties in the game, this is fully supported by our model \eqref{prob:1}.
For example: (i) to  incur risk-aversion, the regularizer of each player $g^i$ can be chosen accordingly, e.g., $L^1$-norm; (ii) to ensure a meaningful solution, such as the \textit{global minimax point} condition defined by \cite{JNJ19}, restriction of the decision set to a compact convex set can be applied.

In our non-convex setting, obtaining the global measure of Nash equilibrium is beyond reach, and may not exist at all  \cite[Prop. 6]{JNJ19}.
Thus, a different, local, measure for equilibrium is essential.
This topic is already receiving much attention in the literature, for example,  for a multi-player non-convex games, \cite{PG11} proposes the local \textit{quasi-Nash equilibrium} measure defined using KKT conditions.
In the case of a (two-players) minmax game (e.g., GANs) for example, local measure is defined as the stationarity (first-order condition) of both players in  the very recent \cite{NSHLR19,JNJ19}.
For additional details, we refer to the works alluded above.

We follow the smoothed local equilibrium approach \cite[Sec. 6]{hazan_efficient_2017} , and  extend it here to our composite model.
This approach comes naturally from assuming that the players take into account the behavior history of the other players.
Other than that, it allows for a tractable notion of equilibrium.

The \textit{smoothed local equilibrium} is defined for the joint cost function  \eqref{eq:cost_game} as follows, where $ S^i_{t,w}(\bx) = \frac{1}{w} \sum_{j=t-w+1}^{t} f^i_{j} (\bx)$.
\begin{defin}[smoothed local equilibrium]
	Let $\eta>0,w\geq 1$. 
	For an $m$-player iterative game with cost functions as in \eqref{eq:cost_game}, 
	a joint strategy at iteration $t>0$, $(\bx_t^1,\dots, \bx_t^{i-1}, \bx_t^i,\dots,\bx_t^m)$, is an $\varepsilon$-$(\eta,w)$ \textit{smoothed local equilibrium} with respect to the history of $w$-iterates if:
	\begin{equation}
	\label{eq:equib}
	\left\| \mathcal{P}_{\eta}^{g^i}(\bx^i_{t};\nabla S^i_{t,w}(\bx^i_{t}) \right\|^{2}\leq \varepsilon \qquad \forall i\in [m].
	\end{equation}
\end{defin}
Denote by $\reg^i_w (T) $ the local regret (cf. \cref{eq:8}) of the $i$-th player.
We first derive a guarantee for when each player has access to a perfect first-order oracle (using \cref{thm:1}).
\begin{theorem}[Equilibrium with perfect oracle]
		Let the sequence $(\bx_t^1,\dots, \bx_t^{i-1}, \bx_t^i,\dots,\bx_t^m)$, $t=1,\dotsc,T$ be generated by running \cref{alg:3} for all players simultaneously with input $\eta>0$  and $w = \lceil 2 k  (\delta^2 +c) \varepsilon^{-1/2} \rceil$, given that the online function is determined by \eqref{eq:cost_game}.
		Suppose that $V_w[T] \leq cT $ for some $c>0$.
		Then  there exists $t^*\geq w$ such that \eqref{eq:equib} holds true.
\end{theorem}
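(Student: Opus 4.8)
The plan is to run the single-agent guarantee of \cref{thm:1} in parallel across the players and then extract one good time via an averaging argument. First I would observe that when all $m$ players execute \cref{alg:3} simultaneously, the loss stream faced by player $i$ is precisely the sequence $f_t^i$ from \eqref{eq:cost_game}, obtained by freezing the opponents at their current iterates. Each such $f_t^i$ inherits $L$-smoothness and the uniform bound $M$ over $\dom g^i$ from $f$, so player $i$'s stream satisfies the blanket assumptions of \cref{sec:2}; consequently \cref{thm:1} applies to the iterates $\bx_t^i$ and, using $V_w[T]\le cT$, yields
\begin{equation*}
\reg_w^i(T) = \sum_{t=1}^{T}\big\|\mathcal{P}_\eta^{g^i}(\bx_t^i;\nabla S_{t,w}^i(\bx_t^i))\big\|^2 \le \frac{2}{w^2}\big(T\delta^2 + V_w[T]\big) \le \frac{2T}{w^2}(\delta^2+c)
\end{equation*}
for every $i\in[m]$.

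Next I would sum this bound over the $m$ players and exchange the order of summation, obtaining $\sum_{t=1}^{T}\sum_{i=1}^{m}\|\mathcal{P}_\eta^{g^i}(\bx_t^i;\nabla S_{t,w}^i(\bx_t^i))\|^2 \le 2mT(\delta^2+c)/w^2$. Restricting the outer sum to $t\in\{w,\dots,T\}$ (which only discards nonnegative terms) and dividing by the number $T-w+1$ of such indices, the pigeonhole principle produces a time $t^*\ge w$ with $\sum_{i=1}^{m}\|\mathcal{P}_\eta^{g^i}(\bx_{t^*}^i;\nabla S_{t^*,w}^i(\bx_{t^*}^i))\|^2 \le \frac{2mT}{(T-w+1)w^2}(\delta^2+c)$. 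For $w$ at most $T/2$ (a mild horizon condition) the factor $T/(T-w+1)$ is bounded by $2$, so the right-hand side is $O\!\big(m(\delta^2+c)/w^2\big)$.

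Finally, since each summand is nonnegative, the bound on the sum bounds every individual term: $\|\mathcal{P}_\eta^{g^i}(\bx_{t^*}^i;\nabla S_{t^*,w}^i(\bx_{t^*}^i))\|^2 \le O\!\big(m(\delta^2+c)/w^2\big)$ for all $i\in[m]$. The prescribed choice $w=\lceil 2k(\delta^2+c)\varepsilon^{-1/2}\rceil$, with $k$ absorbing the player count and the constant from the averaging step, forces this quantity below $\varepsilon$ simultaneously for all $i$, which is exactly \eqref{eq:equib} evaluated at $t^*$.

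The main obstacle — and the only genuinely game-theoretic point — is legitimizing the use of \cref{thm:1} under the coupling between players: the stream $f_t^i$ is not exogenous but is shaped adversarially by the concurrent moves of the opponents, so one must confirm that the smoothness and boundedness hypotheses survive this dependence and that $V_w[T]\le cT$ is the correct uniform-across-players hypothesis (it holds automatically whenever $f$ has bounded gradients on the joint domain, e.g.\ on a compact feasible set). A secondary technical care is ensuring $t^*\ge w$ rather than merely $t^*\in[T]$, so that the sliding average $S_{t^*,w}^i$ is a full window and the equilibrium notion \eqref{eq:equib} is meaningful; this is what the restricted averaging over $\{w,\dots,T\}$ accomplishes.
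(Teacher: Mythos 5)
Your proposal is correct and takes essentially the same route as the paper's own proof: apply \cref{thm:1} to each player's loss stream, sum the resulting regret bounds over the players, and use an averaging/pigeonhole argument over $t\in\{w,\dots,T\}$ to extract a single $t^*\geq w$ at which the summed (hence each individual) prox-residual is small, with the prescribed choice of $w$ driving the bound below $\varepsilon$. The only immaterial differences are that you impose the mild horizon condition $w\leq T/2$ where the paper instead fixes $T=w^2$ inside its proof, and that you make explicit the per-player applicability of \cref{thm:1} under the coupling induced by \eqref{eq:cost_game}, which the paper leaves implicit.
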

\begin{proof}
There exists a $t^*\geq w$ such that
\begin{align*}
\sum_{i=1}^{k} \left\| \mathcal{P}_{\eta}^{g^i}(\bx^i_{t^*};\nabla f^i_{t^*}(\bx^i_{t^*}) \right\|^{2} &\leq \frac{1}{T-w} \sum_{i=1}^{k}\sum_{t=w}^{T}\left\| \mathcal{P}_{\eta}^{g^i}(\bx^i_{t};\nabla f^i_{t}(\bx^i_{t}) \right\|^{2} \\
&\leq \frac{1}{T-w} \sum_{i=1}^{k} \reg^i_w (T).
\end{align*}
Thus, if each player has access to a perfect first-order oracle and $V_w[T] \leq cT$, then by \cref{thm:1}
\begin{equation*}
\sum_{i=1}^{k} \left\| \mathcal{P}_{\eta}^{g^i}(\bx^i_{t^*};\nabla f^i_{t^*}(\bx^i_{t^*}) \right\|^{2} \leq \frac{1}{T-w} \sum_{i=1}^{k} \frac{2}{w^2}\left( T \delta^2 + V_w[T] \right) \leq \frac{2 k T (\delta^2 +c)}{(T-w) w^2}.
\end{equation*}
Consequently, by setting $T = w^2$ and $w = \lceil 2 k  (\delta^2 +c) \varepsilon^{-1/2} \rceil$ we obtain
\begin{equation*}
\sum_{i=1}^{k} \left\| \mathcal{P}_{\eta}^{g^i}(\bx^i_{t^*};\nabla f^i_{t^*}(\bx^i_{t^*}) \right\|^{2}  \leq \frac{2 k  (\delta^2 +c)}{(w-1) w} \leq \varepsilon,
\end{equation*}
as desired.
\end{proof}
By similar arguments, we derive the guarantees for when players have access via a stochastic first-order oracle, only now we utilize \cref{thm:2}; we implicitly assume here that all the conditions of \cref{thm:2} are satisfied.
\begin{theorem}[Equilibrium with stochastic first-order oracle]
	Suppose that the sequence $(\bx_t^1,\dots, \bx_t^{i-1}, \bx_t^i,\dots,\bx_t^m)$, $t=1,\dotsc,T$ is generated by running \cref{alg:3} for all players simultaneously with input $\eta>0$  and $w = \lceil \frac{2k  \left( \delta^2 + 7\sigma^2 + 6c\right)}{\sqrt{\varepsilon } } \rceil $, given that the online function is determined by \eqref{eq:cost_game}.
	Suppose that $V_w[T] \leq cT $ for some $c>0$.
	Then  there exists $t^*\geq w$ such that \eqref{eq:equib} holds true in expectation.
\end{theorem}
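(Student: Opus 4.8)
The plan is to mirror the perfect-oracle argument of the preceding theorem, but to replace the deterministic regret bound of \cref{thm:1} by the expected regret bound of \cref{thm:2} and to carry every estimate through in expectation (so that the players are now running the stochastic scheme \cref{alg:4}). The first observation is that it suffices to exhibit an index $t^*\geq w$ for which
\begin{equation*}
\sum_{i=1}^{k} \mathbb{E}\left[ \left\| \mathcal{P}_{\eta}^{g^i}(\bx^i_{t^*};\nabla S^i_{t^*,w}(\bx^i_{t^*})) \right\|^{2} \right] \leq \varepsilon ,
\end{equation*}
since every summand is nonnegative, so this single inequality already forces $\mathbb{E}[\|\mathcal{P}_{\eta}^{g^i}(\bx^i_{t^*};\nabla S^i_{t^*,w}(\bx^i_{t^*}))\|^2]\leq\varepsilon$ for each player $i\in[k]$, which is exactly the equilibrium condition \eqref{eq:equib} in expectation.

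Next I would run the same averaging (pigeonhole) step as in the deterministic case, over the window-feasible times. Restricting the time index to $\{w,\dots,T\}$ and recalling the definition \eqref{eq:8} of the per-player local regret gives $\sum_{t=w}^{T}\|\mathcal{P}_{\eta}^{g^i}(\bx^i_{t};\nabla S^i_{t,w}(\bx^i_{t}))\|^2 \leq \reg^i_w(T)$. Taking expectations and exchanging the finite sum with $\mathbb{E}[\cdot]$ shows that the time-average over $\{w,\dots,T\}$ of $\mathbb{E}[\sum_{i=1}^k \|\,\cdot\,\|^2]$ is at most $\tfrac{1}{T-w}\sum_{i=1}^k \mathbb{E}[\reg^i_w(T)]$, so some index $t^*$ attains a value no larger than this average.

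It then remains to bound each $\mathbb{E}[\reg^i_w(T)]$ through \cref{thm:2} and to tune the parameters. From player $i$'s viewpoint, the coupled dynamics \eqref{eq:cost_game} produce a stream of losses $f^i_1,f^i_2,\dots$ to which \cref{alg:4} is applied, so \cref{thm:2} gives $\mathbb{E}[\reg^i_w(T)]\leq 2(T/w^2)(\delta^2+7\sigma^2)+(6/w^2)V_w[T]$. Substituting $V_w[T]\leq cT$ and, as in the perfect-oracle proof, $T=w^2$, the average of the previous paragraph collapses to a quantity of order $k(\delta^2+7\sigma^2+c)/(w(w-1))$; the stated choice $w=\lceil 2k(\delta^2+7\sigma^2+6c)/\sqrt{\varepsilon}\rceil$ is (more than) large enough to push this below $\varepsilon$, which closes the argument.

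The step I expect to require the most care is the online-to-game reduction underlying the use of \cref{thm:2}. One must check that \cref{thm:2} applies simultaneously to all $k$ players even though each player's loss sequence is generated adversarially by the others' concurrent updates through \eqref{eq:cost_game}: because \cref{thm:2} is an adversarial online guarantee, it is insensitive to how the stream $\{f^i_t\}$ is produced, provided each $f^i_t$ inherits the blanket $L$-smoothness and boundedness from $f$ and the per-player variation satisfies $V_w[T]\leq cT$. A secondary subtlety is measurability: the extraction of a single deterministic index $t^*$ must be performed on the expected time-average (rather than on a data-dependent index), which the pigeonhole phrasing above respects.
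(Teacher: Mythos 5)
Your proposal matches the paper's proof essentially step for step: the same pigeonhole averaging over $t\in\{w,\dots,T\}$ bounded by the per-player local regret, the same application of \cref{thm:2} with $V_w[T]\leq cT$, and the same tuning $T=w^2$ with the stated choice of $w$. Your ordering \textendash\ taking expectations first and then extracting a deterministic $t^*$ from the expected time-average \textendash\ is in fact slightly more careful than the paper's (which picks $t^*$ pathwise and then takes expectations), and your minor constant slip ($c$ in place of $6c$ in the collapsed average) is harmless since the stated $w$ already carries the factor $6c$.
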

\begin{proof}
	There exists a $t^*\geq w$ such that
	\begin{align*}
	\sum_{i=1}^{k} \left\| \mathcal{P}_{\eta}^{g^i}(\bx^i_{t^*};\nabla f^i_{t^*}(\bx^i_{t^*}) \right\|^{2} &\leq \frac{1}{T-w} \sum_{i=1}^{k}\sum_{t=w}^{T}\left\| \mathcal{P}_{\eta}^{g^i}(\bx^i_{t};\nabla f^i_{t}(\bx^i_{t}) \right\|^{2} \\
	&\leq \frac{1}{T-w} \sum_{i=1}^{k} \reg^i_w (T).
	\end{align*}
	Thus, by taking expectation and using the fact that $V_w[T] \leq cT$, we obtain from \cref{thm:2} that
	\begin{align*}
	\sum_{i=1}^{k} \mathbb{E} \left\| \mathcal{P}_{\eta}^{g^i}(\bx^i_{t^*};\nabla f^i_{t^*}(\bx^i_{t^*}) \right\|^{2}
	&\leq \frac{1}{T-w} \sum_{i=1}^{k} 2\left( \left(\frac{T}{w^2} \right)  \left( \delta^2 + 7\sigma^2\right) + \frac{6}{w^2} V_w [T]\right)
	\\
	&=  \frac{2k T \left( \delta^2 + 7\sigma^2 + 6c\right)}{(T-w) w^2}.
	\end{align*}
	Consequently, by setting $T = w^2$ and $w = \lceil \frac{2k  \left( \delta^2 + 7\sigma^2 + 6c\right)}{\sqrt{\varepsilon } } \rceil$ we obtain
	\begin{equation*}
	\sum_{i=1}^{k}  \mathbb{E} \left\| \mathcal{P}_{\eta}^{g^i}(\bx^i_{t^*};\nabla f^i_{t^*}(\bx^i_{t^*}) \right\|^{2}  \leq \frac{2k  \left( \delta^2 + 7\sigma^2 + 6c\right)}{(w-1) w } \leq \varepsilon,
	\end{equation*}
	as desired.
\end{proof}

\subsection{The \acl{ONTAP}}
\label{app:TAP1}
Referring to \cite{BG92} and \cite{SS08} for an introduction to the topic, the key objective in \aclp{TAP} is the optimal allocation of traffic over a given network with variable traffic inflows.
To state this precisely, consider a directed multi-graph $\mathcal{G} = (\mathcal{V},\mathcal{E})$ with vertex set $\mathcal{V}$ and edge set $\mathcal{E}$.
Embedded in this network is a set of origin-destination (O/D) pairs $(o_{i},d_{i}) \in \mathcal{V}\times\mathcal{V}$, $i \in \mathcal{N} = \{1,2,\dotsc,N\}$, each routing a (possibly random) quantity of traffic from $o_{i}$ to $d_{i}$ via a set of paths $\mathcal{P}_{i}$ in $\mathcal{G}$.
Writing $\mathcal{K}_{i} = \Delta(\mathcal{P}_{i})$ for the simplex spanned by $\mathcal{P}_{i}$, a \emph{trafic allocation vector} for the $i$-th O/D pair is defined to be a vector $\bx_{i} = (x_{i,p_{i}})_{p_{i}\in\mathcal{P}_{i}} \in \mathcal{K}_{i}$ with each $x_{i,p_{i}}$ denoting the fraction of the traffic of the $i$-th O/D pair that is routed via $p_{i}$.
Then, collectively, a \emph{traffic allocation profile} is an ensemble $\bx = (\bx_{1},\dotsc,\bx_{N})$ of such vectors belonging to the product space $\mathcal{K} = \prod_{i} \mathcal{K}_{i}$.

In this general context, the cost (delay, latency, etc.) of routing a certain amount of traffic via a given path $p_{i}$ is a function $\ell_{p_{i}}(\bx;\blambda)$ of the chosen allocation profile $\bx\in\mathcal{K}$ and the set of \emph{traffic demands} $\blambda = (\lambda_{1},\dotsc,\lambda_{N})$ of each O/D pair.%
These demands are typically assumed to follow a non-stationary probability distribution (e.g., accounting for diurnal variations in an urban traffic network), leading to the \acdef{ONTAP} stated below:
\begin{equation}
\label{eq:ONTAP1}
\tag{OnTAP}
\begin{aligned}
\textrm{minimize}
&\quad
\ell_{t}(\bx)
= \sum_{i\in\mathcal{N}} \sum_{p_{i}\in\mathcal{P}_{i}} x_{i,p_{i}} \ell_{p_{i}}(\bx;\blambda_{t})
+ \mu \|\bx\|_{1}
\\
\textrm{subject to}
&\quad
\bx \in \mathcal{K}. 
\end{aligned}
\end{equation}
In the above formulation, the sparsity-inducing $L^{1}$ term is intended to ``robustify'' solutions by minimizing the overall number of paths employed.
The cost functions $\ell_{p_{i}}$ are sums of positive polynomials (described below), so they are smooth over $\mathcal{K}$ but may otherwise be non-convex.
As such, \eqref{eq:ONTAP1} can be cast in the framework of \eqref{prob:1} by taking $g = \delta_{\mathcal{K}} + \mu\|\cdot\|_{1}$ with $\delta_{\mathcal{K}}$ denoting the convex indicator of $\mathcal{K}$.

Let us now detail the definition of the cost functions $\ell_{p_{i}}$ for \eqref{eq:ONTAP1}.
For simplicity, we will suppress the O/D index $i\in\mathcal{N}$, i.e., we will treat the problem as a single-O/D one;
this doesn't play a major role in the sequel and only serves to make the notation ligther.

To begin, given a traffic allocation vector $\bx \in \mathcal{K}$ and an inflow rate $\lambda$, the \emph{traffic load} carried by edge $e\in\mathcal{E}$ is defined to be the total traffic routed via the edge in question, i.e.,
\begin{equation}
y_{e}
	\equiv y_{e}(\bx;\lambda)
	= \lambda \sum_{p:p \ni e} x_{p},
\end{equation}
and we write $\by = (y_{e})_{e\in\mathcal{E}}$ for the corresponding \emph{load profile} on the network.
Given all this, the cost (delay, latency, etc.) experienced by an infinitesimal traffic element traversing edge $e$ is given by a non-decreasing continuous \emph{cost function} $\ell_{e}\colon \mathbb{R}_{+} \to \mathbb{R}_{+}$;
more precisely, if $\by \equiv \by(\bx;\lambda)$ is the load profile induced by a traffic allocation profile $\bx\in\mathcal{K}$ and a traffic demand $\lambda$, the incurred cost on edge $e\in\mathcal{E}$ is simply $\ell_{e}(y_{e})$.
Hence, the associated cost for path $p\in\mathcal{P}$ will be
\begin{equation}
\ell_{p}(\bx;\lambda)
	\equiv \sum_{e\in p} \ell_{e}(y_{e}(\bx;\lambda))
	= \sum_{e\in p} \ell_{e}\left( \lambda \sum_{p':p'\ni e} x_{p'} \right).
\end{equation}
In urban traffic networks, the cost functions $\ell_{e}$ are typically non-decreasing positive polynomials fitted to appropriate statistical data;
a common choice is the so-called ``quartic BPR'' model $\ell_{e}(y_{e}) = a_{e} + b_{e} y_{e}^{4}$ of the US Bureau of Public Roads (BPR), but this is beyond our scope.

%----------------------------------------------------------------------
%%% APP: PROOFS
%----------------------------------------------------------------------
%----------------------------------------------------------------------
%%% APPENDIX
%----------------------------------------------------------------------
% !TEX root = ../Main.tex

\section{Regretfulness when $w=1$}
\label{sec:aexm}
For completeness, we provide a simple example for when the "standard" stationarity measure \cref{eq:1}, obtained from the local regret when $w=1$, fails.
The bound $O(T/w^{2})$ established in \cite[Thm. 2.7]{hazan_efficient_2017} is proved via a similar example.

Suppose that $g(x) = \delta_{[-1,1]} (x)$ is the indicator function for the set $[-1,1]$, and that
\begin{equation*}
f_t (x) = \begin{cases}
-x & \text{ with probability } 0.5, \\
x & \text{ with probability } 0.5.
\end{cases}
\end{equation*}
Then
\begin{equation*}
%\label{eq:1}
\mathbb{E} \reg_1 (T) = \mathbb{E}  \sum_{t=1}^{T} \left\| \mathcal{P}_{\eta}^{g}(\bx_{t};\nabla f_{t}(\bx_{t}) \right\|^{2} \geq O(T).
\end{equation*}

\section{Fundamental Properties}
\label{sec:a2}
Throughout the analysis, we  utilize  fundamental properties of the prox operator for $L$-smooth functions. 
The  descent lemma (see e.g., \cite[Lem. 5.7]{B17}) and the sufficient decrease property of the prox-grad operator  (cf. \cite[Lem. 10.4]{B17}) are given as follows.
\begin{lem}[Descent lemma]
	\label{lem:des}
	Let $f:\real^n\rightarrow(-\infty,\infty]$ be an $L$-smooth function ($L\geq 0$) over a convex set $C\subseteq\real^n$.
	Then for any $\bx,\by\in C$,  $ 	f(\by) \leq f(\bx) + \langle \nabla f(\bx), \by - \bx\rangle + \frac{L}{2} \| \bx - \by \|^2.$
\end{lem}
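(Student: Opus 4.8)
The plan is to reduce everything to a one-dimensional integral along the segment joining the two points, and then control the integrand using the Lipschitz-gradient hypothesis. Since $C$ is convex, the whole segment $\bx + t(\by - \bx)$, $t\in[0,1]$, lies in $C$, so $f$ is differentiable along it and the auxiliary function $\phi(t) = f(\bx + t(\by - \bx))$ is well defined and (continuously) differentiable on $[0,1]$. First I would compute $\phi'(t) = \langle \nabla f(\bx + t(\by - \bx)), \by - \bx\rangle$ by the chain rule, and write the exact increment via the fundamental theorem of calculus:
\begin{equation*}
f(\by) - f(\bx) = \phi(1) - \phi(0) = \int_0^1 \langle \nabla f(\bx + t(\by - \bx)), \by - \bx\rangle \, dt.
\end{equation*}

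Next I would subtract the linear term. Since $\langle \nabla f(\bx), \by - \bx\rangle = \int_0^1 \langle \nabla f(\bx), \by - \bx\rangle\, dt$, the gap of interest becomes a single integral of a gradient difference:
\begin{equation*}
f(\by) - f(\bx) - \langle \nabla f(\bx), \by - \bx\rangle = \int_0^1 \langle \nabla f(\bx + t(\by - \bx)) - \nabla f(\bx), \by - \bx\rangle \, dt.
\end{equation*}
The final step is to bound the integrand by Cauchy--Schwarz and the $L$-smoothness assumption: for each $t$,
\begin{equation*}
\langle \nabla f(\bx + t(\by - \bx)) - \nabla f(\bx), \by - \bx\rangle \leq \| \nabla f(\bx + t(\by - \bx)) - \nabla f(\bx)\| \, \|\by - \bx\| \leq L t \|\by - \bx\|^2,
\end{equation*}
where the Lipschitz bound is applied to the pair of points $\bx + t(\by - \bx)$ and $\bx$, whose distance is exactly $t\|\by - \bx\|$. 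Integrating $\int_0^1 Lt\,dt = L/2$ then yields the claimed inequality $f(\by) \leq f(\bx) + \langle \nabla f(\bx), \by - \bx\rangle + \tfrac{L}{2}\|\bx - \by\|^2$.

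This argument is entirely routine, so I do not anticipate a genuine obstacle; the only points requiring a word of care are that differentiability of $f$ along the segment is guaranteed precisely because $L$-smoothness is posited \emph{over $C$} and convexity keeps the segment inside $C$, and that the Lipschitz estimate is invoked only for points of $C$, so the hypothesis is never used outside its stated domain. Should one wish to avoid the integral representation, an equivalent route is to apply the standard mean-value/integral remainder form of Taylor's theorem directly; either way the essential mechanism is bounding the gradient deviation by $Lt\|\by-\bx\|$ and integrating the resulting factor of $t$.
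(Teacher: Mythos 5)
Your proof is correct and is exactly the standard argument: the paper does not prove \cref{lem:des} itself but cites it to \cite[Lem.~5.7]{B17}, whose proof is precisely this fundamental-theorem-of-calculus-along-the-segment computation with Cauchy\textendash Schwarz and the Lipschitz bound on the gradient. Your attention to the two domain issues (the segment staying in $C$ by convexity, and the Lipschitz hypothesis being invoked only at points of $C$) is appropriate and matches how the cited source handles the restriction to a convex set.
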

\begin{lem}[Sufficient decrease property]
	\label{lem:suf}
	Let $h:\real^n \rightarrow\real \cup\{\infty\}$ be a proper, convex,  l.s.c function, and $f:\real^n\rightarrow(-\infty,\infty)$ be an $L$-smooth function ($L\geq 0$) over $\dom h$.
	Then for any $\bx\in\int\dom h$ and $\eta\in (0, L/2)$ it holds for $\bx^+ = \mathrm{prox}_{\eta h} (\bx - \eta \nabla f(\bx))$ that
	\begin{equation*}
		h(\bx) + f(\bx) - h(\bx^+) - f(\bx^+) \geq \eta \left( 1 - \dfrac{\eta L}{2}\right)  \left\| \dfrac{1}{\eta} \left( \bx^+ - \bx\right) \right\|^2.
	\end{equation*}
\end{lem}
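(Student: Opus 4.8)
The plan is to derive the inequality by pairing the first-order optimality characterization of the proximal step with the descent lemma (\cref{lem:des}), and then to exploit an exact cancellation of the gradient cross-terms. Throughout I would write $\bx^+ = \mathrm{prox}_{\eta h}(\bx - \eta \nabla f(\bx))$ and note that, since $h$ is proper convex and l.s.c., this minimizer exists and is unique. First I would unfold the definition: $\bx^+$ minimizes $\bz \mapsto \eta h(\bz) + \tfrac12 \|\bz - (\bx - \eta \nabla f(\bx))\|^2$. As this is the sum of the convex function $\eta h$ and a smooth (everywhere finite) strongly convex quadratic, the optimality condition is $0 \in \eta \partial h(\bx^+) + (\bx^+ - \bx + \eta \nabla f(\bx))$, equivalently $\tfrac1\eta(\bx - \bx^+) - \nabla f(\bx) \in \partial h(\bx^+)$. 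Feeding this particular subgradient into the subgradient inequality for $h$ at the point $\bx$ gives
\[
h(\bx) - h(\bx^+) \geq \tfrac1\eta \|\bx - \bx^+\|^2 - \langle \nabla f(\bx), \bx - \bx^+\rangle.
\]

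Next I would control the smooth part with the descent lemma applied to $f$ between $\bx$ and $\bx^+$, which yields $f(\bx) - f(\bx^+) \geq \langle \nabla f(\bx), \bx - \bx^+\rangle - \tfrac{L}{2}\|\bx - \bx^+\|^2$. Adding this to the previous display, the two occurrences of $\langle \nabla f(\bx), \bx - \bx^+\rangle$ cancel exactly, leaving
\[
h(\bx) + f(\bx) - h(\bx^+) - f(\bx^+) \geq \Big(\tfrac1\eta - \tfrac{L}{2}\Big)\|\bx - \bx^+\|^2.
\]
Finally I would regroup the coefficient: $\tfrac1\eta - \tfrac{L}{2} = \tfrac1\eta\big(1 - \tfrac{\eta L}{2}\big)$, so the right-hand side equals $\eta\big(1 - \tfrac{\eta L}{2}\big)\big\|\tfrac1\eta(\bx^+ - \bx)\big\|^2$, which is precisely the asserted bound.

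The argument is essentially routine, so there is no serious obstacle; the only points demanding care are (i) tracking signs so that the gradient cross-terms cancel rather than reinforce, and (ii) the elementary regrouping that rewrites $\tfrac1\eta - \tfrac L2$ into the stated prox-residual form. I would also flag two uses of the hypotheses: the assumption $\bx \in \intr \dom h$ ensures $h(\bx)$ is finite, so that $h(\bx) - h(\bx^+)$ is a genuine real number and the subgradient inequality is meaningful; and the stated step-size restriction plays no role in the derivation itself (which is valid for every $\eta>0$) but serves only to keep the leading coefficient $1 - \tfrac{\eta L}{2}$ nonnegative, so that the right-hand side is a true descent quantity.
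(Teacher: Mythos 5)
Your proof is correct, and it is essentially the paper's own route: the paper does not prove this lemma but imports it from \cite[Lem.~10.4]{B17}, and your argument---the prox optimality condition $\tfrac{1}{\eta}(\bx-\bx^+)-\nabla f(\bx)\in\partial h(\bx^+)$ fed into the subgradient inequality at $\bx$, added to the descent lemma so that the $\langle\nabla f(\bx),\bx-\bx^+\rangle$ terms cancel---is precisely the standard derivation behind that citation. One small remark: as you note, the computation holds for every $\eta>0$, and the stated restriction $\eta\in(0,L/2)$ is evidently a typo for $\eta\in(0,2/L)$, which is what actually makes the coefficient $1-\tfrac{\eta L}{2}$ positive and is consistent with the range $\eta\in(0,1/L)$ used in the paper's algorithms.
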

We also use a trivial, yet essential, property of the prox-grad mapping. 
\begin{lem}
	\label{lem:2}
	For any $\bx,\bd_1,\bd_2\in\real^n$ and $\eta>0$  it holds that
	\begin{equation*}
		\left\| \mathcal{P}_{\eta}^{g} (\bx ; \bd_1 + \bd_2) \right\| \leq \left\| \mathcal{P}_{\eta}^{g} (\bx ; \bd_1) \right\|  + \left\| \bd_2 \right\|.
	\end{equation*}
\end{lem}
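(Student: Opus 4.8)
The plan is to reduce the claim to the standard nonexpansiveness (i.e.\ $1$-Lipschitz continuity) of the proximal operator of a proper, convex, lower semicontinuous function. First I would expand both residuals using the definition in \cref{eq:1a}: since
\begin{equation*}
\mathcal{P}_{\eta}^{g}(\bx;\bd) = \frac{1}{\eta}\bigl( \bx - \mathrm{prox}_{\eta g}(\bx - \eta\bd) \bigr),
\end{equation*}
the common term $\bx/\eta$ cancels when I subtract the two residuals, leaving
\begin{equation*}
\mathcal{P}_{\eta}^{g}(\bx;\bd_1+\bd_2) - \mathcal{P}_{\eta}^{g}(\bx;\bd_1) = \frac{1}{\eta}\Bigl( \mathrm{prox}_{\eta g}(\bx-\eta\bd_1) - \mathrm{prox}_{\eta g}(\bx - \eta(\bd_1+\bd_2)) \Bigr).
\end{equation*}

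Next I would bound the right-hand side. Because $g$ is proper, convex, and l.s.c.\ by the blanket assumptions on \eqref{prob:1}, the map $\mathrm{prox}_{\eta g}$ is nonexpansive, so
\begin{equation*}
\bigl\| \mathrm{prox}_{\eta g}(\bx-\eta\bd_1) - \mathrm{prox}_{\eta g}(\bx-\eta(\bd_1+\bd_2)) \bigr\| \leq \bigl\| (\bx-\eta\bd_1) - (\bx-\eta(\bd_1+\bd_2)) \bigr\| = \eta \|\bd_2\|.
\end{equation*}
Dividing by $\eta$ yields $\| \mathcal{P}_{\eta}^{g}(\bx;\bd_1+\bd_2) - \mathcal{P}_{\eta}^{g}(\bx;\bd_1) \| \leq \|\bd_2\|$, so the perturbation $\bd_2$ changes the residual by at most its own norm.

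Finally I would close with the triangle inequality,
\begin{equation*}
\| \mathcal{P}_{\eta}^{g}(\bx;\bd_1+\bd_2) \| \leq \| \mathcal{P}_{\eta}^{g}(\bx;\bd_1) \| + \| \mathcal{P}_{\eta}^{g}(\bx;\bd_1+\bd_2) - \mathcal{P}_{\eta}^{g}(\bx;\bd_1) \| \leq \| \mathcal{P}_{\eta}^{g}(\bx;\bd_1) \| + \|\bd_2\|,
\end{equation*}
which is exactly the asserted bound. There is no genuine obstacle here: the only nontrivial ingredient is the nonexpansiveness of the proximal operator, which is a classical consequence of the convexity of $g$, and it is precisely what makes the factor $\eta$ cancel cleanly against the $1/\eta$ appearing in the definition of the residual. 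The result is the Lipschitz-type stability of the prox residual in its direction argument that is later exploited to absorb the stochastic gradient error in \cref{sec:4}.
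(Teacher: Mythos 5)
Your proof is correct and follows essentially the same route as the paper's: both arguments combine the nonexpansiveness of $\mathrm{prox}_{\eta g}$ (which makes the factor $\eta$ cancel against the $1/\eta$ in the residual) with the triangle inequality. The only cosmetic difference is that you write out the cancellation of $\bx/\eta$ explicitly and use the forward triangle inequality, whereas the paper starts from the reverse triangle inequality; the content is identical.
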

\begin{proof}
	By the triangle inequality and non-expensiveness of the prox operator (cf. \cite[Theorem 6.42]{B17})
	\begin{align*}
	\left\| \mathcal{P}^{g}_{ \eta } (\bx;\bd_1 + \bd_2) \right\| -  \left\| \mathcal{P}^{g}_{ \eta } (\bx; \bd_1) \right\| &\leq  \left\| \mathcal{P}^{g}_{ \eta } (\bx;\bd_1+\bd_2) - \mathcal{P}^{g}_{ \eta } (\bx; \bd_1) \right\| \\
	&\leq \frac{1}{\eta}\left\| \left( \bx - \eta (\bd_1 + \bd_2)\right) - \left( \bx - \eta \bd_1 \right)  \right\| = \| \bd_2 \|.
	\end{align*}
\end{proof}

\section{Proofs of \cref{sec:3}}
\label{sec:3a}
\begin{proof}[Proof of  \cref{thm:1}]
	Note that
	\begin{align*}
	S_{t} (\bx) = \frac{1}{w} \sum_{i=t-w+1}^{t} f_{i} (\bx) 
	&= S_{t-1} (\bx) + \frac{1}{w}(f_t(\bx) - f_{t-w}(\bx)) .
	\end{align*}
	Setting $h_1  = S_{t-1}$, $h_2 = \frac{1}{w}(f_t - f_{t-w})$, applying Lemma \ref{lem:2} and the triangle inequality yields
	\begin{align*}
	\left\|  \mathcal{P} (\bx_t ; \nabla S_{t} (\bx_t)) \right\|
		&= \left\| \mathcal{P}(\bx_t; \nabla (h_1 + h_2 )(\bx_t)) \right\|
		\\
		&\leq \left\| \mathcal{P}(\bx_t ; \nabla S_{t-1} (\bx_t)) \right\|  +\frac{1}{w} \left\| \nabla f_t(\bx_t) - \nabla f_{t-w}(\bx_t) \right\|.
	\end{align*}
	By the definition of the method, i.e. $ \left\| \mathcal{P}(\bx_t ; \nabla S_{t-1} (\bx_t)) \right\| \leq \frac{\delta}{w}$, we thus have that
	\begin{align*}
	\left\|  \mathcal{P} (\bx_{t} ; \nabla S_{t} (\bx_t)) \right\|  \leq \frac{\delta}{w}  + \frac{1}{w} \left\| \nabla f_t(\bx_t) - \nabla f_{t-w}(\bx_t) \right\|, \qquad \forall t\in [T],
	\end{align*}
	and consequently, for any $t\in [T]$,
	\begin{align*}
	\left\|  \mathcal{P}(\bx_{t} ; \nabla S_{t} (\bx_t)) \right\|^2  
	&\leq \frac{2\delta^2}{w^2}  + \frac{2}{w^2} \left\| \nabla f_t(\bx_t) - \nabla f_{t-w}(\bx_t) \right\|^2.
	\end{align*}
	Summing over $t = 1,\ldots,T$, then results with
	\begin{align*}
	\reg_w (T) = \sum_{t=1}^{T} \left\|  \mathcal{P}(\bx_{t} ; \nabla S_{t} (\bx_t)) \right\|^2  \leq \frac{2}{w^2}\left( T \delta^2  +  V_w [T] \right).
	\end{align*}
\end{proof}
To prove that Algorithm \ref{alg:3} executes $O(w^2)$ prox-grad calls, we require a sufficient decrease property that is given next.
\begin{lem}[Sufficient decrease property]
	\label{lem:3}
	Let $t\in [T]$, and let $\tau_{t}$ be the number of times step 3 is executed at the $t$-th iteration.
	Then 
	\begin{equation*}
	%	\label{eq:7}
	S_{t,w} (\bx_t) + g(\bx_t)  - S_{t,w} (\bx_{t+1})  - g(\bx_{t+1})  \geq \tau_t \left( \eta - \frac{\eta^2 L}{2}\right) \frac{\delta^2}{w^2}, \qquad \forall t\in [T].
	\end{equation*}
\end{lem}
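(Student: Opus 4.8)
The plan is to establish the per-iteration sufficient decrease property (\cref{lem:3}) by first proving a single-step decrease using \cref{lem:suf}, and then accumulating the decrease over the $\tau_t$ inner iterations of the while loop. The key observation is that at the $t$-th outer iteration, both $S_{t,w}$ and $g$ are fixed functions, and each execution of step 3(a) is precisely a prox-grad step on the composite objective $S_{t,w} + g$. This reduces the problem to a telescoping argument over the inner loop.

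First I would fix $t \in [T]$ and relabel the iterates produced by the while loop so that $\bx_t = \bz^0, \bz^1, \dots, \bz^{\tau_t} = \bx_{t+1}$, where each $\bz^{j+1} = \mathrm{prox}_{\eta g}(\bz^j - \eta \nabla S_{t,w}(\bz^j))$ is obtained via the update in step 3(a) (this matches \cref{eq:prox-res} with $\bd = \nabla S_{t,w}(\bz^j)$). Since $S_{t,w}$ is $L$-smooth over $\dom g$ (being an average of $L$-smooth functions $f_i$) and $g$ is proper, convex, and l.s.c., the hypotheses of \cref{lem:suf} are met with $f \leftarrow S_{t,w}$, $h \leftarrow g$. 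Applying \cref{lem:suf} to each step gives
\begin{equation*}
S_{t,w}(\bz^j) + g(\bz^j) - S_{t,w}(\bz^{j+1}) - g(\bz^{j+1}) \geq \eta\left(1 - \frac{\eta L}{2}\right)\left\|\mathcal{P}^g_\eta(\bz^j; \nabla S_{t,w}(\bz^j))\right\|^2.
\end{equation*}
The crucial point is that for every $j = 0, \dots, \tau_t - 1$, the while-loop condition in step 3 was satisfied, meaning $\|\mathcal{P}^g_\eta(\bz^j; \nabla S_{t,w}(\bz^j))\| > \delta/w$, so the right-hand side is bounded below by $\eta(1 - \eta L/2)(\delta/w)^2$ for each such $j$.

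Then I would sum the single-step inequality over $j = 0, \dots, \tau_t - 1$. The left-hand side telescopes to $S_{t,w}(\bz^0) + g(\bz^0) - S_{t,w}(\bz^{\tau_t}) - g(\bz^{\tau_t}) = S_{t,w}(\bx_t) + g(\bx_t) - S_{t,w}(\bx_{t+1}) - g(\bx_{t+1})$, while the right-hand side, having $\tau_t$ identical lower bounds, becomes $\tau_t \, \eta(1 - \eta L/2)\,\delta^2/w^2 = \tau_t(\eta - \eta^2 L/2)\delta^2/w^2$, which is exactly the claimed bound.

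The main subtlety — though not an obstacle so much as a bookkeeping point — is the strict-versus-nonstrict nature of the while-loop test: the decrease guarantee applies only to iterates at which the loop condition $\|\mathcal{P}\| > \delta/w$ holds, and by the definition of $\tau_t$ these are exactly the first $\tau_t$ iterates $\bz^0, \dots, \bz^{\tau_t - 1}$, with the terminal iterate $\bz^{\tau_t}$ excluded from the lower bound (it is the one satisfying the exit condition). I would be careful to index the sum so that exactly $\tau_t$ strictly-violating iterates contribute, matching the stated coefficient. I would also note that the step-size requirement $\eta \in (0, 1/L)$ from the input of \cref{alg:3} ensures $\eta < 2/L$, so the factor $\eta(1 - \eta L/2)$ is strictly positive, making the bound nonvacuous. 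No further results beyond \cref{lem:suf} and the algorithm's definition are needed.
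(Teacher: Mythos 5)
Your proof is correct and takes essentially the same route as the paper's: both apply \cref{lem:suf} with $f \leftarrow S_{t,w}$ and $h \leftarrow g$ to each inner prox-grad step, lower-bound each residual by $\delta/w$ via the while-loop condition, and telescope over the inner iterates. Your indexing care (exactly the first $\tau_t$ iterates satisfy the strict loop test, the terminal one being excluded) matches the paper's summation over $k = 0, 1, \dots, \tau_t - 1$.
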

\begin{proof}
	Denote the  sequence generated in the inner loop at time $t\in [T]$ by
	$$\by_t^0 = \bx_t, \quad \by_t^{k+1} = \argmin_{\bz\in\real^n} g(\bz) +\langle \nabla  S_{t} (\by_t^k), \bz-\by_t^k \rangle   + \frac{1}{2\eta} \| \bz - \by_t^k\|^2, \qquad k=0,1,\ldots,\tau_t-1,$$
	and note that $\by_t^{\tau_t} = \bx_{t+1}$.
	By the sufficient decrease property of the prox-grad operator (cf.  Lemma \ref{lem:suf}), and the stopping criteria of the inner loop, we have that for all $k=0,1,\ldots,\tau_t-1$
	\begin{equation}
	\label{eq:7a}
	S_{t} (\by_t^k) + g(\by_t^k)  - S_{t} (\by_t^{k+1})  - g(\by_t^{k+1})  \geq \left( \eta - \frac{\eta^2 L}{2}\right) \left\|  \mathcal{P} (\by_t^k ; \nabla S_t (\by_t^k)) \right\|^2 \geq \left( \eta - \frac{\eta^2 L}{2}\right) \frac{\delta^2}{w^2}.
	\end{equation}
	Summing \eqref{eq:7a} over $k =  0,1,\ldots,\tau_t-1$, then yields
	\begin{align*}
	%	\label{eq:7}
	S_{t} (\bx_t) + g(\bx_t)  - S_{t} (\bx_{t+1})  - g(\bx_{t+1})
		&= S_{t} (\by_t^0) + g(\by_t^0)  - S_{t} (\by_t^{\tau_t})  - g(\by_t^{\tau_t})
		\\
		&\geq \tau_t \left( \eta - \frac{\eta^2 L}{2}\right) \frac{\delta^2}{w^2}
	\end{align*}
which completes our proof.
\end{proof}

We will now bound the number of prox-grad iterations executed by Algorithm \ref{alg:3}.
\begin{proof} [Proof of \cref{thm:3}]
	Recall that $S_{0} (\bx_0) \equiv 0$, and $S_{t} (\bx) = \frac{1}{w}(f_t(\bx) - f_{t-w}(\bx)) +S_{t-1}  (\bx) $.
	Thus,
	\begin{align*}
	S_{T} (\bx_T) &= \sum_{t=1}^{T} \left( S_{t} (\bx_{t}) -S_{t-1}  (\bx_{t-1}) \right) \\
	&=  \frac{1}{w}\sum_{t=1}^{T} \left( f_t(\bx_t) - f_{t-w}(\bx_t) \right) + \sum_{t=2}^{T} \left(S_{t-1}  (\bx_t) -S_{t-1}  (\bx_{t-1}) \right)\\
	&=  \frac{1}{w}\sum_{t=T-w+1}^{T} f_t(\bx_t)  + \sum_{t=2}^{T} \left(S_{t-1}  (\bx_t) -S_{t-1}  (\bx_{t-1}) \right)\\
	&\leq M + \sum_{t=2}^{T} \left(S_{t-1}  (\bx_t) -S_{t-1}  (\bx_{t-1}) \right),
	\end{align*}
	where the last inequality follows from our blanket assumptions.
	Consequently, by Lemma \ref{lem:3}, we have that
	\begin{align*}
	S_{T} (\bx_T) + g(\bx_T) - g(\bx_1)&\leq M +  \sum_{t=2}^{T} \left(S_{t-1}  (\bx_t) + g(\bx_t) -S_{t-1}  (\bx_{t-1}) - g(\bx_{t-1}) \right) \\
	&\leq M - \sum_{t=1}^{T-1} \tau_t \left( \eta - \frac{\eta^2 L}{2}\right) \frac{\delta^2}{w^2}\\
	&\leq M  - \tau \left( \eta - \frac{\eta^2 L}{2}\right)  \frac{\delta^2}{w^2},
	\end{align*}
	where the last inequality uses $\tau = \sum_{t=1}^{T-1} \tau_t $.
	On the other hand, by our blanket assumptions,
	\begin{align*}
	S_{T} (\bx_T) 
	=  \frac{1}{w} \sum_{i=T-w+1}^{T} f_{i} (\bx_i)  &\geq - M.
	\end{align*}
	By combining both sides we obtain that
	\begin{align*}
	-M \leq g(\bx_1) - g(\bx_T ) +  M  - \tau \left( \eta - \frac{\eta^2 L}{2}\right)  \frac{\delta^2}{w^2},
	\end{align*}
	and the desired immediately follows from the nonnegativity of $g$:
	\begin{equation*}
%	\pushQED{\qed} 
	\tau \leq  \frac{g(\bx_1) - g(\bx_T ) + 2 M }{\left( \eta - \frac{\eta^2 L}{2}\right) \frac{\delta^2}{w^2}} \leq \frac{2 w^2 (g(\bx_1)  + 2 M)}{\left( 2 - \eta L\right) \eta \delta^2}.
	\hfill
	\qedhere
	\end{equation*}
\end{proof}

We conclude with the implication of our guarantees to the stochastic offline setting.
\begin{proof}[Proof of Corollary \ref{cor:2}]
	From the choice of $t_*$, Jensen's inequality,  and  \cref{thm:1}, we have that
	\begin{align*}
		\mathbb{E}_{t_*} \left(  \left\|\nabla f (\bx_{t_*}) \right\|^2 \right) &= \dfrac{1}{T-w} \sum_{t=w}^{T}   \left\|\mathbb{E} \left( \nabla f_t (\bx_{t})  \right) \right\|^2  \\
		&= \dfrac{1}{T-w} \sum_{t=w}^{T}   \left\|\mathbb{E} \left( \frac{1}{w}\sum_{i=t-w+1}^{t}\nabla f_i (\bx_{t})  \right) \right\|^2  \\
		& \leq \dfrac{1}{T-w} \sum_{t=w}^{T} \mathbb{E} \left(  \left\|\frac{1}{w}\sum_{i=t-w+1}^{t}\nabla f_i (\bx_{t}) \right\|^2 \right) \\
		&\leq \dfrac{1}{T-w} \mathbb{E} \left( \reg_w (T) \right) \\
		&\leq \frac{2}{(T-w) w^2} \left( T \delta^2  +  V_w [T] \right). 
	\end{align*}
Plugging the parameters' values $T=2w$, $w = \lceil\sqrt{\frac{2(\delta^2 + c )}{\varepsilon}} \rceil$, and $V_w[T] = c T$, we immediately obtain that
	\begin{align*}
		\mathbb{E} \left(  \left\|\nabla f (\bx_{t_*}) \right\|^2 \right) \leq \frac{2}{(T-w) w^2}  \left( \delta^2  T  + V_w [T]\right) &\leq  \frac{4}{ w^2}  \left( \delta^2 + c \right)  \leq \varepsilon.
	\end{align*}
Once again, by plugging the parameters' values we obtain from  \eqref{eq:21} in Theorem \ref{thm:3} that
	\begin{align*}
		\tau &\leq \frac{2 w^2 (g(\bx_1)  + 2 M)}{\left( 2 - \eta L\right) \eta \delta^2}
		\propto  O(\varepsilon^{-1}).
	\end{align*}
	Since for each prox-grad update the algorithm computes $w$ gradient samples (for each function sampled in the time-window), the SFO complexity is
	\begin{equation*}
	\tau w \propto  O(\varepsilon^{-3/2}).
	\qedhere
	\end{equation*}
\end{proof}

\section{Proofs of  \cref{sec:4}}
\label{sec:4a}
Before proceeding to the stochastic analysis, we make some notational conventions for the sake of readability: $S_t \equiv S_{t,w}$, $T (\bx ; \bd) \equiv T^{f,g}_{\eta} (\bx ; \bd)$, and $\mathcal{P} (\bx ; \bd) \equiv \mathcal{P}_{\eta}^{g} (\bx ; \bd) $.
Additionally, we set $\by_t^k = \by_t^{\tau_t}$ for all $k\geq \tau_t$; this means that $\by_t^k = \by_t^{k+1}$ if and only if $k\geq \tau_t $.\\

The forthcoming analysis of \cref{alg:4} requires delicate treatment of what is known, and what is not, at specific moments during the run. 
To avoid confusion, we state explicitly what is included in the algorithm's natural filtration at time $t\geq 1$ and at each inner iteration $k\geq 1$, thus extending on our original description. 
\begin{defin}[Filtration]
	\label{def:2a}
	For all $t\geq 1$, the filtration $\mathcal{F}_{t}$ includes all gradient feedback up to, but not including, the execution of step 2 at stage $t$.
	In particular, it includes $f_t$, $\bx_t$ and $\tilde{\nabla}S_{t-1} (\bx_t)$, but it does not include $\tilde{\nabla} f_t (\bx_t)$.
	
	For all $t\geq 1$ and all $k\geq 1$, the filtration $\mathcal{F}_{t,k}$ includes all gradient feedback up to, but not including, the execution of the $k$-th iteration of step 5(b) at time $t$.
	In particular, it contains $\mathcal{F}_t$, and includes $\by_t^{k}, G_t^k,$ and $\by_t^{k+1}$, but it does not include $\{ \tilde{\nabla} f_i (\by^{k+1}_t)\}_{i=t-w}^t$, $G_t^{k+1}$.
\end{defin}

We will utilize two trivial technical corollaries of   \cref{def:1} given next.% (cf. \cref{sec:4a}).
\begin{corollary}
	Let $\bx\in\real^n$, then 
	\begin{equation}
	\label{eq:17}
	\mathbb{E} \left(\| \mathcal{S}_{\sigma}(\bx;\omega,h)- \nabla h (\bx) \| \right)^2  \leq \mathbb{E} \left(\| \mathcal{S}_{\sigma}(\bx;\omega,h)- \nabla h (\bx) \|^2 \right) \leq  \sigma^2.
	\end{equation}
\end{corollary}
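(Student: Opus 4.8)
The plan is to treat the chain \eqref{eq:17} as two separate inequalities linked by the intermediate quantity $\mathbb{E}\big(\|\mathcal{S}_{\sigma}(\bx;\omega,h) - \nabla h(\bx)\|^2\big)$. To streamline the argument I would introduce the nonnegative real-valued random variable
\[
Y \equiv \|\mathcal{S}_{\sigma}(\bx;\omega,h) - \nabla h(\bx)\|,
\]
so that the left-hand side of \eqref{eq:17} is $(\mathbb{E} Y)^2$, the middle term is $\mathbb{E}(Y^2)$, and the statement becomes $(\mathbb{E} Y)^2 \leq \mathbb{E}(Y^2) \leq \sigma^2$.

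For the first inequality I would invoke Jensen's inequality applied to the convex map $s \mapsto s^2$; equivalently, it is the nonnegativity of the variance, since $\mathbb{E}(Y^2) - (\mathbb{E} Y)^2 = \var(Y) \geq 0$. This needs no hypothesis beyond the existence of the first two moments of $Y$, which is guaranteed by the variance bound. For the second inequality there is essentially nothing to prove: it is exactly property (2) of \cref{def:1}, which asserts $\mathbb{E}\big(\|\mathcal{S}_{\sigma}(\bx;\omega,h) - \nabla h(\bx)\|^2\big) \leq \sigma^2$. Chaining the two bounds yields \eqref{eq:17}.

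Honestly, there is no genuine obstacle here: the corollary is a one-line consequence of Jensen's inequality together with the defining variance bound of the \ac{SFO}. The only point requiring a (trivial) verification is that the first two moments of $Y$ are finite, which follows a posteriori from the very bound $\mathbb{E}(Y^2)\leq\sigma^2$ we are using. I would therefore present the proof in a single short display, noting that it is recorded separately only because both inequalities are reused repeatedly in the forthcoming stochastic analysis of \cref{alg:4}.
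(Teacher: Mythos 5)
Your proof is correct and takes essentially the same route the paper intends: the paper states this corollary without any written proof, treating it as an immediate consequence of Jensen's inequality (equivalently, nonnegativity of the variance of $\|\mathcal{S}_{\sigma}(\bx;\omega,h)-\nabla h(\bx)\|$) chained with the variance bound in property (2) of \cref{def:1}, which is precisely your two-step argument. Nothing is missing.
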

\begin{lem}
	\label{lem:5}
	Let $\bx\in\real^n$ and $h_i:\real^n\rightarrow\real$ for any $i=1,2,\ldots,w$.
	Then
	\begin{equation*}
	\mathbb{E} \left( \left\| \frac{1}{w}\sum_{i=1}^w \mathcal{S}_{\sigma}(\bx;\omega,h_i)-  \frac{1}{w}\sum_{i=1}^w \nabla h_i (\bx) \right\|^2 \right) \leq  \sigma^2.
	\end{equation*}
\end{lem}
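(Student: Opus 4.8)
The plan is to reduce the estimate to the single-function variance bound of \cref{def:1} through convexity of the squared Euclidean norm, deliberately \emph{without} appealing to any independence of the oracle evaluations: all $w$ samples here are driven by the same seed $\omega$, so independence is unavailable and must be avoided.

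First I would introduce the per-function error vectors
$$\bu_i \equiv \mathcal{S}_{\sigma}(\bx;\omega,h_i) - \nabla h_i(\bx), \qquad i = 1,\dots,w,$$
so that the vector inside the norm is exactly the uniform average $\tfrac{1}{w}\sum_{i=1}^{w} \bu_i$. Applying the second condition of \cref{def:1} with $h = h_i$ gives, for each index $i$ separately, the second-moment estimate $\mathbb{E}\left(\|\bu_i\|^2\right) \le \sigma^2$.

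Next I would invoke convexity of the map $\bu \mapsto \|\bu\|^2$ (equivalently, Jensen's inequality for the uniform average), yielding the pointwise bound
$$\left\| \frac{1}{w}\sum_{i=1}^{w} \bu_i \right\|^2 \le \frac{1}{w}\sum_{i=1}^{w} \|\bu_i\|^2.$$
Taking expectations of both sides and inserting the termwise bound $\mathbb{E}\left(\|\bu_i\|^2\right)\le\sigma^2$ then gives $\mathbb{E}\left(\|\tfrac{1}{w}\sum_{i} \bu_i\|^2\right) \le \tfrac{1}{w}\sum_{i} \sigma^2 = \sigma^2$, which is precisely the claim.

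The only conceptual point worth flagging — and the reason the bound is stated as $\sigma^2$ rather than $\sigma^2/w$ — is that the $w$ oracle calls share the common seed $\omega$, so the errors $\bu_i$ are correlated and the cross terms in $\|\sum_i \bu_i\|^2$ need not cancel in expectation. Hence no variance reduction from averaging is available, and the convexity inequality above is exactly the right tool, controlling the averaged error by the worst per-sample second moment uniformly in $w$. I do not expect a genuine obstacle beyond recognizing that independence must not be used; the argument is short once the error vectors are isolated.
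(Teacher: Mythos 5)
Your proof is correct and is precisely the argument the paper intends: the paper's entire proof of \cref{lem:5} is the single line ``Follows from Jensen's inequality,'' which is exactly your convexity step applied to the uniform average, followed by the termwise bound from \cref{def:1}. Your additional remark about the shared seed $\omega$ blocking any $\sigma^2/w$ variance reduction is a correct and worthwhile clarification, but it does not change the route \textendash\ you and the paper take the same approach.
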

\begin{proof}
	Follows from Jensen's inequality.
\end{proof}
The following technical lemma is of key importance in the analysis ahead.
\begin{lem}
	\label{lem:9}
	Let $t\in [T]$ and $k\geq 2$.
	It holds that
	\begin{equation*}
	\mathbb{E} \left( \langle G^k_{t} - \nabla S_{t} (\by_t^k), \by_t^{k+1} - \by_t^k \rangle | \mathcal{F}_{t,k-1} \right) \geq -\dfrac{\eta\sigma^2}{w^2}.
	\end{equation*}
\end{lem}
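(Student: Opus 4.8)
The plan is to exploit that, conditionally on $\mathcal{F}_{t,k-1}$, the iterate $\by_t^k$ is already determined while $G_t^k$ is a \emph{fresh, unbiased} batch estimate of the true gradient $\nabla S_t(\by_t^k)$. First I would record the two probabilistic facts I need. Since for $k\geq 2$ we have $G_t^k = \frac{1}{w}\sum_{i=t-w+1}^t \mathcal{S}_{\sigma/w}(\by_t^k;\omega,f_i)$ evaluated at the $\mathcal{F}_{t,k-1}$-measurable point $\by_t^k$, \cref{def:1} gives unbiasedness, $\mathbb{E}(G_t^k \mid \mathcal{F}_{t,k-1}) = \nabla S_t(\by_t^k)$, and \cref{lem:5} (applied conditionally, with oracle parameter $\sigma/w$ in place of $\sigma$) gives the variance bound $\mathbb{E}(\| G_t^k - \nabla S_t(\by_t^k)\|^2 \mid \mathcal{F}_{t,k-1}) \leq \sigma^2/w^2$. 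Write $\xi := G_t^k - \nabla S_t(\by_t^k)$ for the centered noise.

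The difficulty is that $\by_t^{k+1} = T(\by_t^k; G_t^k)$ depends on $\xi$ through the prox step, so $\langle \xi, \by_t^{k+1} - \by_t^k\rangle$ need not have zero conditional mean. The key idea is to introduce the noiseless prox point $\bar{\by} := T(\by_t^k; \nabla S_t(\by_t^k))$, which is deterministic given $\mathcal{F}_{t,k-1}$, and to split $\by_t^{k+1} - \by_t^k = (\by_t^{k+1} - \bar{\by}) + (\bar{\by} - \by_t^k)$. The second summand is deterministic, so $\mathbb{E}(\langle \xi, \bar{\by} - \by_t^k\rangle \mid \mathcal{F}_{t,k-1}) = \langle \mathbb{E}(\xi\mid\mathcal{F}_{t,k-1}), \bar{\by}-\by_t^k\rangle = 0$ because $\xi$ is centered.

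For the first summand I would combine Cauchy--Schwarz with the nonexpansiveness of the prox operator already invoked in \cref{lem:2}. Since $\by_t^{k+1} = T(\by_t^k; \nabla S_t(\by_t^k)+\xi)$ and $\bar{\by} = T(\by_t^k; \nabla S_t(\by_t^k))$ are proximal images of two points that differ by $\eta\xi$, nonexpansiveness gives $\|\by_t^{k+1} - \bar{\by}\| \leq \eta\|\xi\|$, hence $\langle \xi, \by_t^{k+1} - \bar{\by}\rangle \geq -\|\xi\|\,\|\by_t^{k+1} - \bar{\by}\| \geq -\eta\|\xi\|^2$. Taking conditional expectation and inserting the variance bound yields $\mathbb{E}(\langle \xi, \by_t^{k+1} - \bar{\by}\rangle \mid \mathcal{F}_{t,k-1}) \geq -\eta\sigma^2/w^2$. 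Adding the two pieces gives exactly $\mathbb{E}(\langle G_t^k - \nabla S_t(\by_t^k), \by_t^{k+1} - \by_t^k\rangle \mid \mathcal{F}_{t,k-1}) \geq -\eta\sigma^2/w^2$.

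The main obstacle is precisely the statistical coupling between the noise $\xi$ and the iterate $\by_t^{k+1}$; everything hinges on choosing the deterministic comparison point $\bar{\by}$, so that the zero-mean part separates cleanly and the residual coupling is absorbed by the quadratic term $\eta\|\xi\|^2$ controlled by the variance. I note that the restriction $k\geq 2$ is what makes $G_t^k$ a fresh unbiased batch average at $\by_t^k$: for $k=1$ the vector $G_t^1 = \tilde{\nabla} S_{t,w}(\bx_t)$ is the recursively maintained running estimate, whose conditional structure is different and to which this argument does not directly apply.
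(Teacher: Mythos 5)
Your proposal is correct and follows essentially the same route as the paper's proof: both introduce the noiseless prox point $\hat{\by}^k_t = T^{g}_{\eta}\left( \by_t^k; \nabla S_{t}(\by_t^k)\right)$ (your $\bar{\by}$), kill the cross term $\langle G^k_t - \nabla S_t(\by_t^k), \hat{\by}^k_t - \by_t^k\rangle$ by conditional unbiasedness, and absorb the noise-coupled term via Cauchy--Schwarz and prox nonexpansiveness into $-\eta\|G^k_t - \nabla S_t(\by_t^k)\|^2$, which the variance bound of \cref{lem:5} turns into $-\eta\sigma^2/w^2$. Your explicit remarks on the $\sigma/w$ oracle scaling and on why $k\geq 2$ is needed (so that $G_t^k$ is a fresh batch average at the $\mathcal{F}_{t,k-1}$-measurable point $\by_t^k$, unlike the recursively maintained $G_t^1$) are points the paper uses only implicitly.
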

\begin{proof}%[Proof of Lemma \ref{lem:9}]
	Define the full gradient prox-grad  by $\hat{\by}^k_t = T^{g}_{\eta } (\by_t^k; \nabla S_{t,w} (\by_t^k) )$, and note that
	\begin{align}
	\langle G^k_{t} - \nabla S_{t} (\by_t^k), \by_t^{k+1} - \by_t^k \rangle  &= \langle G^k_{t} - \nabla S_{t} (\by_t^k), \by_t^{k+1} - \hat{\by}^k_t \rangle  + \langle G^k_{t} - \nabla S_{t} (\by_t^k), \hat{\by}^k_t- \by_t^k \rangle \tag*{}\\
	&\geq - \| G^k_{t} - \nabla S_{t} (\by_t^k) \| \|  \by_t^{k+1}  - \hat{\by}^k_t \|   +  \langle G^k_{t} - \nabla S_{t} (\by_t^k), \hat{\by}^k_t- \by_t^k \rangle, \label{eq:3a}
	\end{align}
	where the last inequality follows from Cauchy-Schwartz inequality.
	By the nonexpansivity of the prox operator \cite[Theorem 6.42]{B17} we have that
	\begin{equation*}
	\|  \by_t^{k+1}  - \hat{\by}^k_t \| \leq \|\by_t^k - \eta G^k_{t} - \by_t^k + \eta \nabla  S_{t} (\by_t^k) \| = \eta  \| G^k_{t} - \nabla S_{t} (\by_t^k) \|,
	\end{equation*}
	meaning that
	\begin{equation}
	\label{eq:2a}
	-\| G^k_{t} - \nabla S_{t} (\by_t^k) \|  \|  \by_t^{k+1}  - \hat{\by}^k_t \| \geq -\eta  \| G^k_{t} - \nabla S_{t} (\by_t^k) \|^2.
	\end{equation}
	Plugging (\ref{eq:2a}) to (\ref{eq:3a}) then implies that
	\begin{equation}
	\label{eq:4a}
	\langle G^k_{t} - \nabla S_{t} (\by_t^k), \by_t^{k+1} - \by_t^k \rangle \geq  -\eta  \| G^k_{t} - \nabla S_{t} (\by_t^k) \|^2 + \langle G^k_{t} - \nabla S_{t} (\by_t^k), \hat{\by}^k_t- \by_t^k \rangle.
	\end{equation}
	Noting that by Definition \ref{def:1}
	\begin{equation*}
	\mathbb{E} \left( \langle G^k_{t} - \nabla S_{t} (\by_t^k), \hat{\by}^k_t- \by_t^k \rangle | \mathcal{F}_{t,k-1}\right)   = 0,
	\end{equation*}
	we obtain, from taking expectation on (\ref{eq:4a}) and  using  Lemma \ref{lem:5}, that
	\begin{equation*}
	\mathbb{E} \left( \langle G^k_{t} - \nabla S_{t} (\by_t^k), \by_t^{k+1} - \by_t^k \rangle | \mathcal{F}_{t,k-1} \right) \geq - \dfrac{\eta\sigma^2}{w^2}.
	\qedhere
	\end{equation*}
\end{proof}

We can now embark on proving our claims  stated in  \cref{sec:4}.
\begin{proof}[Proof of Theorem \ref{thm:4}]
	Recall that $ \by_t^1 = \bx_t, \by_t^{\tau_t} = \bx_{t+1},$ and
	$$ \by_t^{k+1} = \argmin_{\bz\in\real^n} g(\bz) +\langle    G^k_{t}, \bz-\by_t^k \rangle   + \frac{1}{2\eta} \| \bz - \by_t^k\|^2, \qquad k\in [\tau_t-1].$$
	Denote $h^k_t := S_{t} (\by_t^k) + g (\by_t^k)$. By combining the descent lemma (cf. Lemma \ref{lem:des}), the definition of $\by_t^{k+1}$, and the stopping criteria of the inner loop, we have that for any $k\in [\tau_t-1]$ (assuming that $\mathcal{F}_{t}$ is given),
	\begin{align*}
	h^k_t - h^{k+1}_t &\geq \langle G^k_{t} - \nabla S_{t} (\by_t^k), \by_t^{k+1} - \by_t^k \rangle + \frac{1}{2}\left( \eta - \eta^2 L\right) \left\|  \mathcal{P} (\by_t^k ; G^k_{t}) \right\|^2 \\
	&\geq \langle G^k_{t} - \nabla S_{t} (\by_t^k), \by_t^{k+1} - \by_t^k \rangle + \frac{1}{2}\left( \eta - \eta^2 L\right) \dfrac{\delta^2}{w^2}. 
	\end{align*}
	Applying expectation to the latter, using the law of total expection (tower rule), and invoking Lemma \ref{lem:9} and relation \eqref{eq:23}, we obtain that for any $k\in [\tau_t-1]$ it holds that
	\begin{align*}
	\mathbb{E} \left( h^k_t - h^{k+1}_t \right) &\geq \mathbb{E} \left( \langle G^k_{t} - \nabla S_{t} (\by_t^k), \by_t^{k+1} - \by_t^k \rangle  \right) + \frac{1}{2}\left( \eta - \eta^2 L\right) \dfrac{\delta^2}{w^2}\\
	&\geq \dfrac{2}{w^2} \left( \eta\left(  1 - \eta L\right)\delta^2 - 2 \sigma^2 \right) > 0.
	\end{align*}
	Set $\alpha := 2\left( \eta\left(  1 - \eta L\right)\delta^2 - 2 \sigma^2 \right)/w^2 >0$.
	From the former, by using the law of total expectation, for any $K\geq 1$ we have that 
	\begin{align*}
	h^1_t + M \geq \mathbb{E} \left( h^1_t - h^{K+1}_t \right) 	 &= \mathbb{E} \left(\sum_{k=1}^{K}( h^k_t - h^{k+1}_t )\right) \\
		&= \sum_{k=1}^{K}\mathbb{E} \left( h^k_t - h^{k+1}_t \right) \\
	&= \sum_{k=1}^{K}\left(  \mathbb{E} \left( h^k_t - h^{k+1}_t | \tau_t\geq k+1 \right) \mathbb{P} (\tau_t\geq k+1)  + 0\cdot \mathbb{P} (\tau_t\leq k) \right) \\
		&	 \geq \alpha \sum_{k=1}^{K}  \mathbb{P} (\tau_t > k) \\
	&\geq \alpha \sum_{k=1}^{K}  \mathbb{P} (\tau_t > K) = \alpha  K  \mathbb{P} (\tau_t > K).
	\end{align*}
	Consequently, we must have that $\tau_t$ is almost surely finite, which in turn implies that $\tau$ must be almost surely finite as it is the finite sum of almost surely finite variables.
\end{proof}
Let us now establish the local regret bound stated in \cref{thm:2}.
\begin{proof}[Proof of \cref{thm:2}]
	Recall that
	\begin{equation}
	\label{eq:31a}
	\reg_w (T) = \sum_{t=1}^{T} \left\| \mathcal{P}(\bx_{t}; \nabla S_{t} (\bx_{t})) \right\|^2 = \sum_{t=1}^{T} \frac{1}{\eta^2}\left\| \bx_t - T (\bx_t; \nabla S_{t} (\bx_{t})) \right\|^2 .
	\end{equation}
	By simple algebra,
	\begin{align}
	\left\| \bx_t - T (\bx_t; \nabla S_{t} (\bx_{t})) \right\|^2 
	&\leq 2 \left\| \bx_t - T (\bx_t;\tilde{\nabla} S_{t} (\bx_{t})) \right\|^2 +2 \left\| T (\bx_t;\tilde{\nabla} S_{t} (\bx_{t})) - T (\bx_t; \nabla S_{t} (\bx_{t})) \right\|^2. \label{eq:32a}
	\end{align}
	Using the nonexpansivity of the prox operator \cite[Theorem 6.42]{B17} we have that
	\begin{align*}
	\left\| T (\bx_t;\tilde{\nabla} S_{t} (\bx_{t})) - T (\bx_t; \nabla S_{t} (\bx_{t})) \right\|^2
		&\leq \left\| \bx_t - \eta \tilde{\nabla} S_{t} (\bx_{t}) - \bx_t + \eta \nabla S_{t} (\bx_{t}) \right\|^2
	\\
		&= \eta^2 \left\| \tilde{\nabla} S_{t} (\bx_{t}) - \nabla S_{t} (\bx_{t}) \right\|^2.
	\end{align*}
	Subsequently, using the law of total expectation and \cref{lem:5}, we obtain the relation
	\begin{align*}
	\mathbb{E} \left( \left\| T (\bx_t;\tilde{\nabla} S_{t} (\bx_{t})) - T (\bx_t; \nabla S_{t} (\bx_{t})) \right\|^2 \right) &= \mathbb{E} \left[ \mathbb{E} \left( \left\| T (\bx_t;\tilde{\nabla} S_{t} (\bx_{t})) - T (\bx_t; \nabla S_{t} (\bx_{t})) \right\|^2 | \mathcal{F}_t \right)\right] \\
	&\leq \eta^2\mathbb{E} \left[ \mathbb{E} \left(\left\| \tilde{\nabla} S_{t} (\bx_{t}) - \nabla S_{t} (\bx_{t}) \right\|^2 | \mathcal{F}_t \right)\right] \leq \dfrac{\eta^2\sigma^2}{w^2}.
	\end{align*}
	Then, plugging the latter to the expected value of \eqref{eq:32a} yields
	\begin{align*}
	\mathbb{E} \left( \left\| \bx_t - T (\bx_t; \nabla S_{t} (\bx_{t}))\right\|^2\right) 
	&\leq 2 \eta^2 \mathbb{E} \left(\left\| \mathcal{P} (\bx_{t};\tilde{\nabla} S_{t} (\bx_{t}) ) \right\|^2\right) + \dfrac{2 \eta^2\sigma^2}{w^2}.
	\end{align*}
	Thus,
	\begin{equation}
	\label{eq:19a}
	\mathbb{E} \left( \reg_w (T) \right) \leq 2\sum_{t=1}^{T} \left[ \mathbb{E} \left( \left\| \mathcal{P} (\bx_{t};\tilde{\nabla} S_{t,w} (\bx_{t}) ) \right\|^2 \right) + \dfrac{\sigma^2}{w^2}\right] .
	\end{equation}
	
	Setting $G_1 = \tilde{\nabla}S_{t-1} (\bx_t) $, $G_2 = \frac{1}{w} (\tilde{\nabla} f_t (\bx_t) - \tilde{\nabla} f_{t-w} (\bx_t) )$, and applying \cref{lem:2} yields
	\begin{align}
	\left\| \mathcal{P} (\bx_t ; \tilde{\nabla} S_{t} (\bx_t)) \right\| = \left\| \mathcal{P} (\bx_t; G_1 + G_2) \right\| &\leq \left\| \mathcal{P} (\bx_t; \tilde{\nabla}S_{t-1} (\bx_t)) \right\| +\frac{1}{w} \left\| \tilde{\nabla} f_t (\bx_t) - \tilde{\nabla} f_{t-w} (\bx_t) \right\| \tag*{} \\
	&\leq \dfrac{\delta}{w} +\frac{1}{w} \left\| \tilde{\nabla} f_t (\bx_t) - \tilde{\nabla} f_{t-w} (\bx_t) \right\|,\label{eq:12a}
	\end{align}
	where the last inequality follows from the termination rule of the inner loop.
	Therefore, 
	\begin{align*}
	\left\| \mathcal{P} (\bx_t ; \tilde{\nabla} S_{t} (\bx_t)) \right\|^2 
	&\leq \frac{2}{w^2} \left( \delta^2 + \left\| \tilde{\nabla} f_t (\bx_t) - \tilde{\nabla} f_{t-w} (\bx_t) \right\|^2\right) .
	\end{align*}
	Using the triangle inequality and the relation $(a+b+c)^2\leq 3(a^2 + b^2 + c^2)$, yields that
	\begin{align*}
	&\left\| \tilde{\nabla} f_t (\bx_t) - \tilde{\nabla} f_{t-w} (\bx_t) \right\|^2 \leq \\ 
	&3\left\| \tilde{\nabla} f_t (\bx_t)- \nabla f_t(\bx_t) \right\|^2 + 3\left\| \nabla f_t(\bx_t) - \nabla f_{t-w}(\bx_t )\right\| ^2 + 3\left\| \nabla f_{t-w} (\bx_t ) - \tilde{\nabla} f_{t-w} (\bx_t) \right\| ^2.
	\end{align*}
	Applying expectation, from the law of total expectation together with \cref{def:1}, we obtain that
	\begin{equation*}
	\begin{array}{lll}
	\mathbb{E} \left[ \left\| \tilde{\nabla} f_t (\bx_t)- \nabla f_t(\bx_t) \right\|^2\right] &= \mathbb{E} \left[ \mathbb{E} \left( \left\| \tilde{\nabla} f_t (\bx_t)- \nabla f_t(\bx_t) \right\|^2 | \mathcal{F}_t\right) \right] &\leq \dfrac{\sigma^2}{w^2}, \\
	\mathbb{E} \left[ \left\| \nabla f_{t-w} (\bx_t ) - \tilde{\nabla} f_{t-w} (\bx_t) \right\|^2\right] &= \mathbb{E} \left[ \mathbb{E} \left( \left\| \nabla f_{t-w} (\bx_t ) - \tilde{\nabla} f_{t-w} (\bx_t) \right\|^2 | \mathcal{F}_{t-w}, \bx_t\right) \right] &\leq \dfrac{\sigma^2}{w^2}. 
	\end{array}
	\end{equation*}
	Thus, 
	$\mathbb{E} \left( \left\| \tilde{\nabla} f_t (\bx_t) - \tilde{\nabla} f_{t-w} (\bx_t) \right\|^2 \right) \leq \dfrac{6 \sigma^2}{w^2} + 3\mathbb{E} \left( \left\| \nabla f_t(\bx_t) - \nabla f_{t-w}(\bx_t )\right\| ^2\right)$, and consequently
	\begin{align*}
	\mathbb{E} \left( \left\| \mathcal{P} (\bx_t ; \tilde{\nabla} S_{t} (\bx_t)) \right\|^2 \right) &\leq\frac{2}{w^2} \left( \delta^2 + \mathbb{E} \left( \left\| \tilde{\nabla} f_t (\bx_t) - \tilde{\nabla} f_{t-w} (\bx_t) \right\|^2\right)\right) \\
	&\leq\frac{2}{w^2} \left( \delta^2 + \dfrac{6 \sigma^2}{w^2} + 3\mathbb{E} \left( \left\| \nabla f_t(\bx_t) - \nabla f_{t-w}(\bx_t )\right\| ^2\right)\right) .
	\end{align*}
	Summing over $t\in [T]$ and plugging $V_w [T]$ defined in \eqref{eq:V} then yields
	\begin{align*}
	\sum_{t=1}^{T} \mathbb{E} \left( \left\| \mathcal{P} (\bx_t ; \tilde{\nabla} S_{t} (\bx_t)) \right\|^2 \right) &\leq 2 \left( \delta^2 + \dfrac{6\sigma^2}{w^2} \right) \left( \frac{T}{w^2} \right) + \frac{6}{w^2} V_w [T] .
	\end{align*}
	Finally, plugging the latter into \eqref{eq:19a}, and recalling that $w\geq 1$, results with the desired bound.
\end{proof}

Finally, we prove the bound on the number of SFO calls, as stated by \cref{thm:5}.
\begin{proof}[Proof of \cref{thm:5}]
	Denote $h^k_t := S_{t} (\by_t^k) + g (\by_t^k)$. By combining the descent lemma (cf. \cref{lem:des}), the definition of the sequence $\{\by_t^k\}_{k\geq 1}$, Young's inequality, and the stopping criteria of the inner loop, we have that for any $K\geq 1$ (assuming that $\mathcal{F}_{t}$ is given)
	\begin{align*}
	%	h^1_t + M &\geq 
	h^1_t - h^{K+1}_t &= \sum_{k=1}^{K}( h^k_t - h^{k+1}_t )
	\geq \sum_{k=1}^{\min\{K,\tau_t\}} \left( \langle G^k_{t} - \nabla S_{t} (\by_t^k), \by_t^{k+1} - \by_t^k \rangle + \frac{ 1- \eta L}{2\eta} \left\| \by^{k+1}_t - \by^k_t \right\|^2 \right) \\
	&\geq \dfrac{1}{2}\sum_{k=1}^{\min\{K,\tau_t\}} \left( - \left\| G^k_{t} - \nabla S_{t} (\by_t^k)\right\|^2 - \left\| \by_t^{k+1} - \by_t^k \right\|^2 + \frac{1- \eta L}{\eta} \left\| \by^{k+1}_t - \by^k_t \right\|^2 \right).
	\end{align*}
	Hence, by \cref{ass:3} and the stopping condition of the inner loop, we obtain 
	\begin{align*}
	h^1_t - h^{K+1}_t 	&\geq 
	\dfrac{1}{2 w^2}\sum_{k=1}^{\min\{K,\tau_t\}} \left( -\sigma^2 + (1- \eta (L+1))\eta\delta^2 \right) 
	= \dfrac{ (1- \eta (L+1))\eta \delta^2 - \sigma^2}{2 w^2} \min\{K,\tau_t\} > 0.
	\end{align*}
	
	Recall that $S_{0,w} (\bx_0) \equiv 0$, and $S_{t} (\bx) = \frac{1}{w}(f_t(\bx) - f_{t-w}(\bx)) +S_{t-1} (\bx) $.
	Using the previous derivations for $t-1$ (setting $K = \tau_{t-1} $ and noting that $h^{\tau_{t-1}+1}_{t-1} = h^{\tau_{t-1}}_{t-1}$), we have that
	\begin{equation}
	\label{eq:24a}
	S_{t-1} (\bx_t) + g(\bx_t) -S_{t-1} (\bx_{t-1}) - g(\bx_{t-1}) = h^{\tau_{t-1}}_{t-1} - h^1_{t-1} \leq - \tau_{t-1} \dfrac{(1- \eta (L+1))\eta \delta^2 - \sigma^2}{2  w^2}.
	\end{equation}
	Thus, since 
	\begin{align*}
	S_{T} (\bx_T) 	= \sum_{t=1}^{T} (S_{t} (\bx_t)-S_{t-1} (\bx_{t-1}) ) 
	&= \sum_{t=1}^{T} \left( \frac{1}{w}(f_t(\bx_t) - f_{t-w}(\bx_t)) +S_{t-1} (\bx_t) -S_{t-1} (\bx_{t-1}) \right)  \\
	&= \frac{1}{w} \sum_{t=T-w+1}^{T} f_t(\bx_t)  + \sum_{t=2}^{T} \left(S_{t-1} (\bx_t) -S_{t-1} (\bx_{t-1}) \right),
	\end{align*}
	we have from our blanket assumptions and relation \eqref{eq:24a}, that 
	\begin{align*}
	S_{T} (\bx_T) 	
	&\leq g(\bx_1) - g(\bx_T) + M - \tau \dfrac{(1- \eta (L+1))\eta \delta^2 -\sigma^2}{2 w^2}.
	\end{align*}
	
	On the other hand, again by our blanket assumptions, $S_{T} (\bx_T) 	= \frac{1}{w} \sum_{i=T-w+1}^{T} f_{i} (\bx_i)  \geq -M.$
	By combining both sides, we obtain that
	\begin{align*}
	-M  \leq g(\bx_1) - g(\bx_T ) +  M - \tau \dfrac{(1- \eta (L+1))\eta \delta^2 -\sigma^2}{2 w^2},
	\end{align*}
	and the bound on $\tau$ immediately follows due to the nonnegativity of $g$.
	Finally, the desired bound on the SFO oracle calls follows from the fact that the inner loop makes $O(w)$ SFO calls per loop.
\end{proof}

\subsection{Implications to Offline Stochastic Optimization}
Next we establish our derivations in the offline scenario described in  \cref{sec:41}.
\begin{proof}[Proof of Theorem \ref{thm:6}]
	Note that $f_t \equiv f$ for any $t\in [T]$ implies that $\nabla S_{t,w} (\bx) \equiv \nabla f (\bx)$.
	From Theorem \ref{thm:2} and the choice of $t_*$ we have that
	\begin{align*}
	\mathbb{E} \left(  \left\| \mathcal{P}(\bx_{t_*}; \nabla f (\bx_{t_*})) \right\|^2 \right) &= \dfrac{1}{T-w} \mathbb{E} \left( \sum_{t=w}^{T} \left\| \mathcal{P}(\bx_{t}; \nabla f (\bx_{t})) \right\|^2 \right) \\
	&\leq \dfrac{1}{T-w} \mathbb{E} \left( \reg_w (T) \right) \\
	&\leq \frac{2}{(T-w) w^2}  \left( \left( \delta^2 + 7\sigma^2\right)T  + 6 V_w [T]\right) .
	\end{align*}
\end{proof}
\begin{proof}[Proof of Corollary \ref{cor:1}]
	From Theorem \ref{thm:6} we immediately obtain that
	\begin{align*}
	\frac{2}{(T-w) w^2}  \left( \left( \delta^2 + 7\sigma^2\right)T  + 6 V_w [T]\right) &=  \frac{4w}{ w^3}  \left( \delta^2 + 7\sigma^2 + c \right)  \leq \varepsilon.
	\end{align*}
	The bound $O(M\sigma\varepsilon^{-3/2}) $ is obtained by plugging the assumed values of $w,T$, and $\delta^2$, to  \eqref{eq:21} in Theorem \ref{thm:5}:
	\begin{align*}
	w\tau &\leq \frac{2\eta w^3 (g(\bx_1)  + 3M )}{(1- \eta (L+1))\delta^2 - \eta\sigma^2 }
	=\frac{2 w^3 (g(\bx_1)  + 3 M )}{\sigma^2 }
	\propto  O(M \sigma\varepsilon^{-3/2}) ,
	\end{align*}
where we used the fact that  $w$ is $O(\sigma/\sqrt{\varepsilon})$.
\end{proof}

%**********************************************************************
%***    BIBLIOGRAPHY
%**********************************************************************
\bibliographystyle{plainnat}
\bibliography{ONCVX_bib,Bibliography-PM}

\end{document}